\documentclass{article} %

\usepackage[margin=1in]{geometry} 
\usepackage{natbib}
\usepackage{authblk}
\usepackage{hyperref}
\usepackage{url}
\usepackage{bm}

\usepackage{verbatim} %
\usepackage{bbm}
\usepackage{mathtools}
\usepackage{amsmath}
\usepackage{amsthm, thmtools}
\usepackage{amstext,amssymb, amsfonts}

\usepackage{empheq} %

\usepackage{lmodern} %

\usepackage{float} %

\usepackage{array,multirow} %

\usepackage{algorithmic}
\usepackage[section]{algorithm} %

\usepackage{hyperref} %
 
\urlstyle{same}

\declaretheorem[within=section]{theorem}

\declaretheorem[sibling=theorem]{lemma}

\declaretheorem[sibling=theorem]{proposition}

\declaretheorem[sibling=theorem]{question}

\newcommand{\R}{\mathbb{R}} %

\newcommand{\cD}{\mathcal D}

\newcommand{\cQ}{\mathcal Q}

\newcommand{\inpro}[2]{\left\langle #1,#2 \right\rangle} %
\newcommand{\Tr}{{\rm Tr}}

\newcommand{\norm}[1]{\left\lVert #1\right\rVert} %

\newcommand{\E}{\mathbb{E}} %

\newcommand{\poly}{\mathrm{poly}}

\renewcommand{\epsilon}{\varepsilon}

  \newcommand{\beq}{\begin{equation}}
  \newcommand{\eeq}{\end{equation}}
  \newcommand{\beqn}{\begin{equation*}}
  \newcommand{\eeqn}{\end{equation*}}
  \newcommand{\beqr}{\begin{eqnarray}}
  \newcommand{\eeqr}{\end{eqnarray}}
  \newcommand{\beqrn}{\begin{eqnarray*}}
  \newcommand{\eeqrn}{\end{eqnarray*}}
  \newcommand{\bmline}{\begin{multline}}
  \newcommand{\emline}{\end{multline}}
  \newcommand{\bmlinen}{\begin{multline*}}
  \newcommand{\emlinen}{\end{multline*}}

\usepackage{cleveref}
\usepackage{wrapfig}

\newcommand{\grad}{\nabla}

\newcommand{\KL}[2]{D_{KL}\left(#1\left\Vert \right. #2\right)}
\newcommand{\wup}{w^{\textrm{up}}}
\newcommand{\wdown}{w^{\textrm{down}}}
\newcommand{\hw}{\hat{w}}
\newcommand{\hx}{\hat{x}}
\newcommand{\data}{\cD_{\textrm{data}}}
\newcommand{\err}{\textrm{error}}

\usepackage{adjustbox}
\usepackage{booktabs}
\usepackage{todonotes}
\usepackage{wrapfig}
\usepackage{caption}
\usepackage{enumitem}
\providecommand{\setR}{\mathbb{R}}

\title{DiscQuant: A Quantization Method for\\ Neural Networks Inspired by Discrepancy Theory}

\author[1]{Jerry Chee}
\author[2]{Arturs Backurs}
\author[3]{Rainie Heck}
\author[2]{Li Zhang}
\author[2]{Janardhan Kulkarni}
\author[3]{Thomas Rothvoss}
\author[2]{Sivakanth Gopi}
\affil[1]{Department of Computer Science, Cornell University}
\affil[2]{Microsoft Research, Microsoft}
\affil[3]{Department of Mathematics, University of Washington}
\affil[ ]{\texttt{jerrychee@cs.cornell.edu,\{lheck98,rothvoss\}@uw.edu}}
\affil[ ]{\texttt{\{arturs.backurs,lizhang4,jakul,sivakanth.gopi\}@microsoft.com}}
\date{}

\begin{document}

\maketitle

\begin{abstract}

Quantizing the weights of a neural network has two steps: (1) Finding a good low bit-complexity representation for weights (which we call the quantization grid) and (2) Rounding the original weights to values in the quantization grid. In this paper, we study the problem of rounding optimally given any quantization grid. The simplest and most commonly used way to round is Round-to-Nearest (RTN). By rounding in a data-dependent way instead, one can improve the quality of the quantized model significantly.

We study the rounding problem from the lens of \emph{discrepancy theory}, which studies how well we can round a continuous solution to a discrete solution without affecting solution quality too much. We prove that given $m=\poly(1/\epsilon)$ samples from the data distribution, we can round all but $O(m)$ model weights such that the expected approximation error of the quantized model on the true data distribution is $\le \epsilon$ as long as the space of gradients of the original model is approximately low rank (which we empirically validate).

Our proof, which is algorithmic, inspired a  simple and practical rounding algorithm called \emph{DiscQuant}. In our experiments, we demonstrate that DiscQuant significantly improves over the prior state-of-the-art rounding method called GPTQ and the baseline RTN over a range of benchmarks on Phi3mini-3.8B and Llama3.1-8B. For example, rounding Phi3mini-3.8B to a fixed quantization grid with 3.25 bits per parameter using DiscQuant gets 64\% accuracy on the GSM8k dataset, whereas GPTQ achieves 54\% and RTN achieves 31\% (the original model achieves 84\%).
We make our code available at \url{https://github.com/jerry-chee/DiscQuant}.
\end{abstract}

\section{Introduction}
Modern deep learning models continue to grow in size, incurring greater challenges to train and serve these models.
Post training compression methods have emerged which aim to make model inference faster and cheaper.
Compressing after pretraining is desirable among practitioners 
who either cannot afford to train models themselves, or do not want to change the expensive training process too much.
In this paper, we study post training quantization (PTQ) of the model weights.
Quantization reduces the memory requirements of the model, and speeds up inference for LLMs under memory-bound settings such as the generation phase (as opposed to prefilling phase which is compute-bound)~\citep{kwon2023efficient}.

The quantization problem can be divided into two overall steps: (1) Construct a good low bit-complexity representation for the weights (we colloquially call this the quantization grid), and (2) Round the original weights to values in the quantization grid.
Within step (1), we also consider those methods which apply a transformation on the weights
to better match the encoding format.
There has been much recent work on weights-only PTQ for LLMs.
To date, the vast majority of such research has been focused on step (1): constructing good low bit representations~\citep{omniquant,quipsharp,aqlm}.
However, work on rounding methods is under-explored.
To the best of our knowledge, Round-to-Nearest (RTN) and GPTQ~\citep{hassibi1993obs,frantar2022obc,frantar2023gptq} are the primary rounding methods for LLM weight quantization.
RTN is a simple baseline, and GPTQ is a data dependent method which aims to match the activations of the quantized model with that of the original model layer-by-layer.

Let $f(w;s)$ be the loss function of a neural network where $w$ are original pretrained weights and $s$ is an input sample; for example $f$ can be the usual cross-entropy loss on input $s$. To find a good rounding solution, we are looking for perturbations of the original weights $w \in \R^n$ that correspond to values in the quantization grid, and do not increase the loss $f$ too much. We further impose the constraint that we only round each parameter up or down, this ensures that we are not changing the original model weights too much. Then the set of allowed quantization points can be pictured as vertices of a hypercube $H$ around $w$. Let $\hat w \in \R^n$ be these perturbed weights, and $\Delta f = f(\hat w; s) - f(w; s)$ be the resulting change in loss function for a sample $s$.
We approximate $\Delta f$ via a first order Taylor expansion: $\Delta f \approx \langle \nabla_w f(w;s), \hat w - w \rangle$.
Some prior works such as~\citet{nagelround,hassibi1993obs} assume the gradients of a pretrained model to be nearly zero, and focus on the second order terms.
We show that this assumption is not always true, the average gradients are close to zero but per-sample gradients can be big; in fact the first order term is a good approximation to $\Delta f$ (see Figure~\ref{fig:lin_approx}).

\begin{wrapfigure}{r}{0.4\textwidth}
\begin{center}
\begin{tikzpicture}[scale=0.8]
  \draw[thick] (0,0,0) -- (3,0,0) -- (3,3,0) -- (0,3,0) -- cycle;
  \draw[thick] (0,0,0) -- (0,0,3) -- (0,3,3) -- (0,3,0);
  \draw[thick] (3,0,0) -- (3,0,3) -- (3,3,3) -- (3,3,0);
  \draw[thick] (0,0,3) -- (3,0,3);
  \draw[thick] (0,3,3) -- (3,3,3);
  
  \filldraw[fill=blue!20, opacity=0.3]
    (-1,0.4,4) -- (4,1,4) -- (4,2.5,-1) -- (-1,1.8,-1) -- cycle;
  
  \filldraw[fill=green!50, opacity=0.3]
    (0,0.8,3) -- (3,1,3) -- (3,2,0) -- (0,1.6,0) -- cycle;
  \node at (1.6, 1.6, 1.6) {$K$};

  \fill[black] (0.6,0.6,0.6) circle (2pt);  %
  \node at (0.9,0.8,0.8) {$w$};
  
  \fill[red] (0,0.8,3) circle (2pt); %
  \fill[red] (3,1,3) circle (2pt);   %
  \fill[red] (3,2,0) circle (2pt);   %
  \fill[red] (0,1.6,0) circle (2pt); %
  
  \draw[thick, dashed, gray] (0,0.8,3) -- (3,1,3);
  \draw[thick, dashed, gray] (3,1,3) -- (3,2,0);
  \draw[thick, dashed, gray] (3,2,0) -- (0,1.6,0);
  \draw[thick, dashed, gray] (0,1.6,0) -- (0,0.8,3);
  
  \node at (4, 2.5, 1.5) {$V$};
  
  \node at (1.5, 4, 1.5) {$H$};
  
\end{tikzpicture}
\end{center}
\caption{An illustrative figure showing the convex polytope $K$ formed by the intersection of an $n$-dimensional hypercube $H$ and an $n-m$ dimensional affine subspace $V$. Any vertex of $K$ should have $n-m$ coordinates which are fully rounded.}
\label{fig:cube}
\end{wrapfigure}
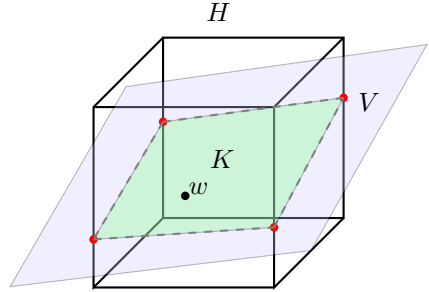

Therefore, to incur a small $\Delta f$, we want $\langle \nabla_w f(w;s), \hat w - w \rangle \approx 0$ for $s$ sampled from the data distribution $\data$. Suppose we are given $m$ independent samples $s_1,s_2,\dots,s_m\sim \data$, we can impose the constraints $\langle \nabla_w f(w;s), \hat w - w \rangle = 0$ which correspond to an affine subspace $V$ of dimension $n-m$. The intersection of the subspace $V$ and the hypercube $H$ is a convex polytope $K$. It can be shown that any vertex of $K$ should have at least $n-m$ fully rounded parameters, see Figure~\ref{fig:cube} for an illustration. Since the number of parameters $n\gg m$, any vertex of $K$ gives an almost fully rounded solution. Obviously this solution satisfies the linear constraints for the samples $s_1,s_2,\dots,s_m$. But will it generalize to unseen samples from the data distribution $\data$? We prove that it can generalize if the distribution of gradients $g=\grad_w f(w;s)$ for $s\sim \data$ is approximately low rank. Let $\Sigma=\E_{s\sim \data}[gg^T]$ where $g=\grad_w f(W;s)$ be the covariance matrix of gradients. We prove the following theorem; the algorithm and the proof draws on techniques from discrepancy theory, in particular the famous Lovett-Meka algorithm~\citep{LovettMekaFOCS12}.
\begin{theorem}[Informal]
    \label{thm:maininformal}
    If the eigenvalues of the covariance matrix of gradients decay polynomially fast, then given $m=\poly\left
(\frac{\log n}{\epsilon}\right)$ samples $s_1,s_2,\dots,s_m \sim \data$ there is a randomized algorithm to find $\hw$ with $n-m$ weights rounded such that $E_{s\sim \data}[|\Delta f|] \le \epsilon.$
\end{theorem}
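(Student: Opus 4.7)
The plan is first to reduce to a linear problem via a first-order Taylor expansion around $w$:
\[
\Delta f(s) = f(\hw; s) - f(w; s) = \langle \grad_w f(w;s),\, \hw - w\rangle + O(\|\hw - w\|^2).
\]
Since $H$ has side length on the order of the quantization grid step, the quadratic remainder is of lower order and can be absorbed, so the task reduces to bounding $\E_{s\sim\data}\bigl[|\langle g(s), \hw - w\rangle|\bigr]$ where $g(s) = \grad_w f(w;s)$.

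\textbf{Lovett-Meka walk in the constraint subspace.} Next, define $V_0 = \{x \in \R^n : \langle g_j, x\rangle = 0,\ j = 1,\dots,m\}$ and $K = H \cap (w + V_0)$. I would run a Brownian motion started at $w$ confined to $w + V_0$, freezing each coordinate as soon as it hits the boundary of $H$ (adding the corresponding coordinate hyperplane to the walk's subspace of constraints). The walk terminates at a vertex $\hw$ of $K$; by the usual dimension count, at least $n-m$ coordinates of $\hw$ are fully rounded, and by construction $\langle g_j, \hw - w\rangle = 0$ for every $j \le m$. The Lovett-Meka framework further ensures that for any fixed direction $h$, the scalar $\langle h, \hw - w\rangle$ is subgaussian with variance roughly $\|P_{V_0} h\|_2^2$ times the walk runtime; this Gaussian-like concentration is what makes generalization possible.

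\textbf{Generalization via covariance decay.} By Cauchy-Schwarz,
\[
\Bigl(\E_{s\sim\data}\bigl[|\langle g(s), \hw - w\rangle|\bigr]\Bigr)^2 \le (\hw - w)^T \Sigma (\hw - w).
\]
Diagonalize $\Sigma = \sum_i \lambda_i v_i v_i^T$ with eigenvalues in decreasing order and split into a top-$k$ part and a tail. Choose $k = \poly(\log n/\epsilon)$ so that $\lambda_{k+1} \cdot \|\hw - w\|_2^2 \le \epsilon^2/2$, using the polynomial decay and the hypercube bound on $\|\hw - w\|_2$. For the top-$k$ part, matrix Bernstein concentration on $\hat\Sigma = \tfrac{1}{m}\sum_{j=1}^m g_j g_j^T$ implies that with $m = \poly(k)$ samples the top-$k$ eigenspaces of $\Sigma$ and $\hat\Sigma$ are close in operator norm; combined with the identity $(\hw - w)^T \hat\Sigma (\hw - w) = 0$ that we enforced, this forces $(\hw - w)^T \Sigma_k (\hw - w) \le \epsilon^2/2$ as well.

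\textbf{Main obstacle.} The delicate step is keeping $\|\hw - w\|_2$ small enough for the $\log n$ sample bound to close. A crude argument gives $\|\hw - w\|_2 = O(\sqrt n)$ (the hypercube diagonal), which would drag a $\sqrt n$ factor into the generalization error and spoil the claimed $\poly(\log n/\epsilon)$ rate. Two routes to address this: (a) exploit that only $m$ coordinates of $\hw$ remain unfrozen at termination, so effectively $\|\hw - w\|_2 = O(\sqrt m)$; or (b) use the subgaussian tail of $\langle h, \hw - w\rangle$ directly and take a union bound over an $\epsilon$-net for the top-$k$ eigenspace of $\Sigma$. Route (b) is cleaner and more faithful to the Lovett-Meka proof strategy, and is what I would pursue first.
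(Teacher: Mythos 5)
Your overall architecture — linearize, run the Lovett--Meka walk subject to $\langle g_j, x-y\rangle = 0$, then argue generalization from the rapid eigenvalue decay of $\Sigma$ — matches the paper's, and you have correctly located the central difficulty: the naive bound $\|\hw - w\|_2 = O(\sqrt{n})$ destroys the $\poly(\log n/\epsilon)$ rate. But neither of your two proposed routes actually closes this gap, and the piece you are missing is the paper's main technical contribution.

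Route (a) is simply false. Frozen coordinates of the walk are frozen at $0$ or $1$, each potentially at distance close to $1$ from the corresponding coordinate of $y$, so having only $O(m)$ \emph{un}frozen coordinates does nothing to bound $\|\hw-w\|_2^2$; it remains $\Theta(n)$ in the worst case. Route (b) is closer in spirit but as sketched does not yield $\epsilon^2$: a union bound over a net gives $\max_i \langle v_i, \hw-w\rangle^2 = O(\log(\text{net size}))$, so $(\hw-w)^T \Sigma_k (\hw-w) \lesssim (\sum_{i \le k}\lambda_i)\log(\cdot) = O(\polylog)$, which does not decay with $m$ at all. And the matrix-Bernstein variant — writing $(\hw-w)^T\Sigma(\hw-w) = (\hw-w)^T(\Sigma-\hat\Sigma)(\hw-w) \le \|\Sigma-\hat\Sigma\|_{\textrm{op}}\|\hw-w\|_2^2$ — reintroduces exactly the $\|\hw-w\|_2^2$ factor you were trying to avoid, besides needing an eigengap to talk about top-$k$ eigenspace stability.

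What the paper does instead is dualize differently. The subgaussian-per-direction property you correctly identify is packaged as the \emph{operator-norm} bound $\E[(x-y)(x-y)^T] \preceq O(1)\cdot I_n$ (Theorem B.1(ii)/(iii)), and it is paired with a \emph{Schatten-1} (nuclear-norm) concentration bound on the empirical covariance: writing $\hat\Sigma = \frac{1}{m}\sum_j g_jg_j^T$ and using $\langle \hat\Sigma, (x-y)(x-y)^T\rangle = 0$ exactly (by the walk constraints), one gets
\[
\E\big[(x-y)^T\Sigma(x-y)\big] = \E\big[\langle \Sigma - \hat\Sigma, (x-y)(x-y)^T\rangle\big] \le \E\big[\|\Sigma-\hat\Sigma\|_{\mathcal{S}(1)}\big]\cdot \big\|\E[(x-y)(x-y)^T]\big\|_{\textrm{op}} \lesssim \E\big[\|\Sigma-\hat\Sigma\|_{\mathcal{S}(1)}\big].
\]
This completely sidesteps any bound on $\|\hw-w\|_2$: the expectation of the outer product is small in operator norm even though individual realizations are not. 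The heavy lifting is then Proposition B.3, a nuclear-norm concentration bound $\E[\|\hat\Sigma - \Sigma\|_{\mathcal{S}(1)}] \lesssim m^{-\min\{1/2,\,\alpha-1\}}$ (up to logs), proved by a dyadic block decomposition of $\hat\Sigma - \Sigma$ in the eigenbasis of $\Sigma$ with rank- and Frobenius-norm bookkeeping per block; the off-diagonal blocks and the low-index diagonal blocks are controlled via $\|A\|_{\mathcal{S}(1)} \le \sqrt{\rk A}\,\|A\|_F$, and the high-index diagonal blocks by positivity and the eigenvalue tail. This is a genuinely different and sharper tool than operator-norm matrix Bernstein, and it requires no eigengap. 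You would also need two ingredients your writeup omits: the fourth-moment ($\beta$-reasonable) assumption on $\mathcal{D}$ to control $\textrm{Var}[(g_j g_j^T)_{ik}]$, and the repetition of the Lovett--Meka walk $O(\log(n/m))$ times (each run only halves the fractional support), which is where one factor of $\log(n/m)$ in the final bound comes from.
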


From these insights we develop a practical rounding algorithm called \emph{DiscQuant}. The Lovett-Meka algorithm does a random walk starting from the original weights until it converges to a vertex of $K$. Instead, we can find a vertex of $K$ by minimizing a linear function over the convex polytope $K$.
DiscQuant uses stochastic gradient descent to minimize two objectives, one corresponding to low $\Delta f$, and the other corresponding to minimizing a linear function. We take a knowledge distillation approach for the first term, minimizing the KL divergence between the original and quantized model.
These two losses are balanced with a regularization parameter $\lambda>0$:
\begin{equation}
\label{eqn:discquant_objective_intro}
\begin{aligned}
    &\min_{\hw} \lambda \inpro{c}{\hw} + \E_{z\sim \data}\E_i[\KL{p_w(\cdot|z_{<i})}{p_{\hw}(\cdot|z_{<i})}]\\
    &s.t.\ \hw \in H. 
\end{aligned}
\end{equation}
Here $p_w(\cdot|z_{<i})$ is the next token distribution given prefix $z_{<i}$. An astute reader may notice that the first order approximation of the KL divergence in~\eqref{eqn:discquant_objective_intro} is exactly zero, and how our discussion above applies. 
In Section \ref{sec:discquant} where we describe in detail our exact optimization objective, we also show that the second order term of KL divergence can be written as $$\E_{z\sim \data}\E_i\E_{t\sim p_w(\cdot|z_{<i})} \left[\inpro{\grad_w \log p_w(t|z_{<i})}{\hw-w}^2\right].$$ So minimizing the KL divergence is a succinct way to impose constraints of the form $\langle \grad_w \log p_w(t|z_{<i}), \hat w - w \rangle \approx 0$ or equivalently $\log p_w(t|z_{<i}) \approx \log p_{\hw}(t|z_{<i})$ where $t\sim p_w(\cdot|z_{<i})$ and $z\sim \data$. Therefore our framework still applies. 
 
After every step of gradient descent, we project the weights back to the hypercube $H$. This ensures that the trajectory of DiscQuant remains within the convex polytope $K$ and eventually converges to a vertex of $K$ with almost all the coordinates rounded. Instead of picking a random direction $c$ to find a random vertex of $K$, we use a special $c^*$ which let's us find the vertex closest to the original weights $w$ (see Section~\ref{sec:discquant}). We use RTN to round the few unrounded parameters left at the end of the optimization.

We perform extensive experiments which show the strength of our method: on models Phi-3-mini-4k-instruct and Meta-Llama-3.1-8B-Instruct, across a variety of evaluation tasks, and across the block scaling and incoherence processing quantization formats.
DiscQuant is agnostic towards the quantization grid, and can therefore be composed with other quantization methods.
Block scaling sets a {\tt bits} parameter which determines the number of grid points, and a unique scaling parameter per {\tt groupsize} weights~\citep{frantar2023gptq}.
Incoherence processing applies a random orthogonal transformation, which reduces the weight ranges and can make quantization easier~\citep{chee2023quip,quipsharp}.
A subset of results can be found in Figure~\ref{fig:phi3_lam3_plot}. Across tasks, models, and quantization levels, our method DiscQuant achieves superior compression over baselines GPTQ and RTN.

\begin{figure}[t]
\centering
\includegraphics[width=0.328\linewidth]{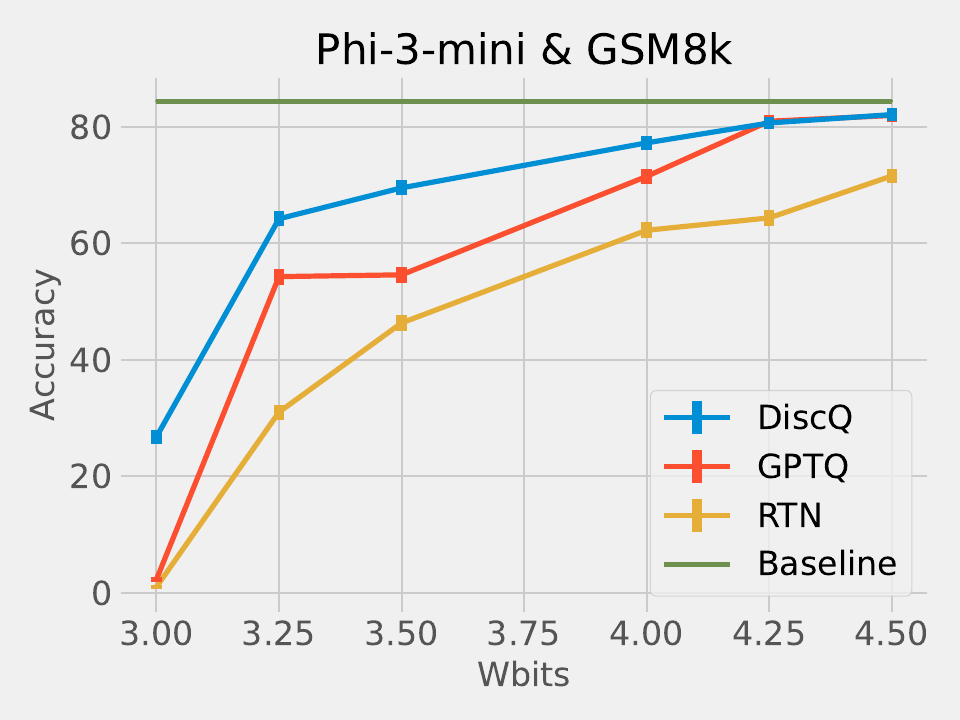} \hfill
\includegraphics[width=0.328\linewidth]{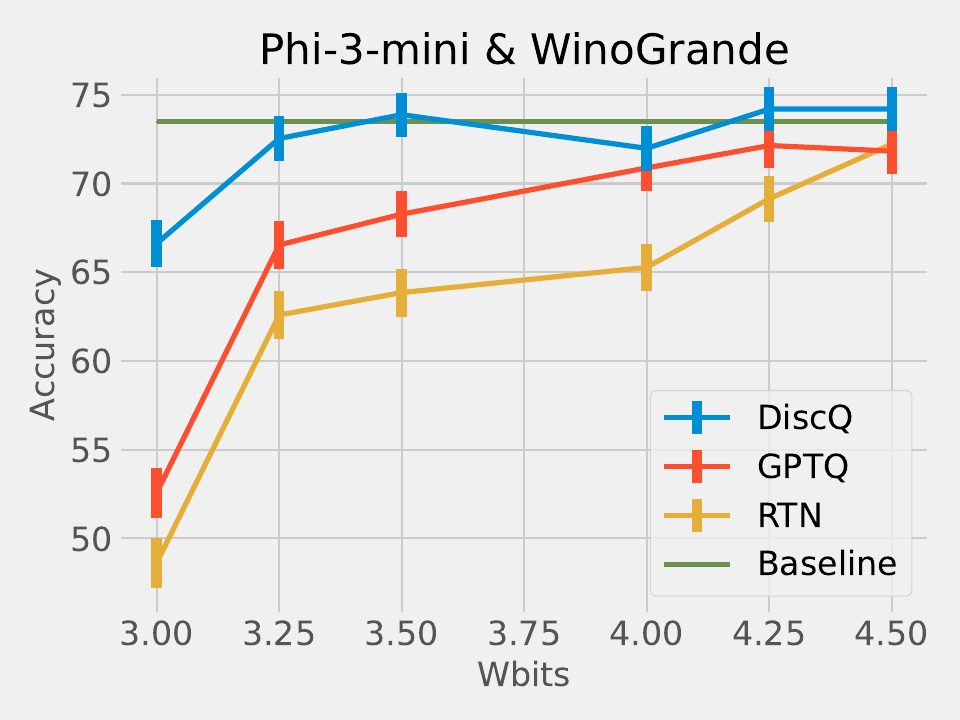} \hfill
\includegraphics[width=0.328\linewidth]{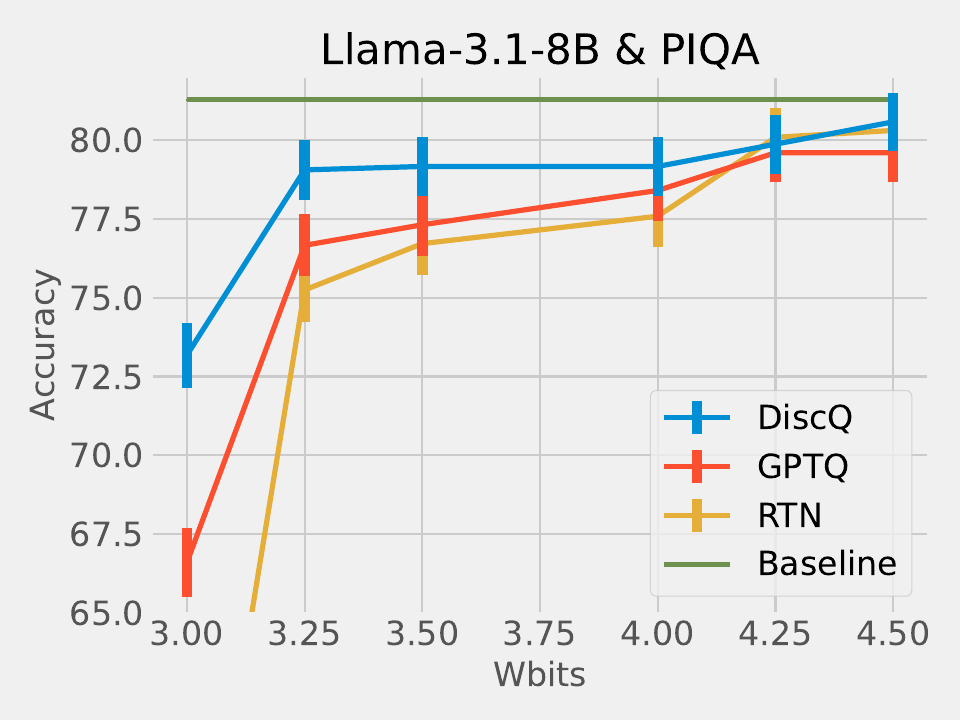}
\caption{Select results quantizing Phi-3-mini-4k-instruct and Meta-Llama-3.1-8B-Instruct using block scaling quantization. GSM8k is a math-based generative task, and WinoGrande and PIQA are multiple choice commonsense reasoning tasks.
Error bars are standard errors from lm-evaluation-harness.
See Section~\ref{sec:experiments} for full results.
}
\label{fig:phi3_lam3_plot}
\end{figure}

We summarize our main contributions:
\begin{itemize}[leftmargin=*]
\item\textbf{Theoretical developments:} 
We prove that it is possible to achieve generalization error $\le \epsilon$ on the true data distribution by rounding all but $\poly(\log n/\epsilon)$ weights, so long as the gradients of the original model are approximately low rank.
\item\textbf{Practical algorithm:} We develop a simple and practical algorithm DiscQuant guided by our theoretical analysis. 
We perform extensive experiments on Phi-3-mini-4k-instruct and Meta-Llama-3.1-8B-Instruct, over block scaling and incoherence processing quantization formats, and a variety of evaluation tasks.
Our method DiscQuant achieves superior or comparable quantization to the baselines GPTQ and RTN as can be seen from Figure~\ref{fig:phi3_lam3_plot}.
\end{itemize}

\section{Related Work}
\label{sec:related_work}
In this paper we focus on weights-only PTQ.
Quantization can also be applied to the activations or KV-cache~\citep{quarot,liu2024spinquant,liu2024kivi}.
Other compression method such as pruning\\ \citep{sparsegpt,wanda} are also outside the scope of this work.
As discussed in the introduction, post training quantization can be divided into two overall steps: (1) Construct a good low bit-complexity representations for the weights (the quantization grid), and (2) Round the original weights to the values in the quantization grid.
To this date, the vast majority of PTQ research for LLMs has focused on step (1).
Note that determining a good compressed representation can involve both encoding formats, as well as transformations to ensure the weights better match the encoding format.

\subsection{Quantization Grids}
One of the more common quantization formats is called block scaling, or group-wise quantization~\citep{frantar2023gptq}.
In addition to the {\tt bits} parameter determining the number of representable points, each {\tt groupsize} parameters share a unique scaling parameter.
Another successful encoding is to identify a small set of important weights and keep them in high precision~\citep{llmint8,dettmers2023spqr,squeezellm}.
\citet{omniquant} learns quantization parameters.
Other works apply transformations to make quantization easier, either relatively simple invariant scalings~\citep{smoothquant,awq}, or more complicated random orthogonal transformations~\citep{chee2023quip,liu2024spinquant}.
Beyond block scaling, there has been work quantizing multiple parameters together using vector quantization~\citep{quipsharp,aqlm,vanbaalen-gptvq} or trellis quantization~\citep{qtip}.

\subsection{Rounding}
To the best of our knowledge, GPTQ~\citep{frantar2023gptq} is the main rounding method for LLMs.
It is based on the Optimal Brain Surgeon~\citep{hassibi1993obs}, which was adapted for pruning and quantization in \citet{frantar2022obc} and then refined for quantization in GPTQ.
GPTQ works by minimizing a layer-wise objective $\|WX - \hat WX\|_2^2$, where $W$ is the weight matrix of a linear layer and $X$ is the matrix of input activations to that layer (stacked as columns).
Two other LLM rounding methods both use coordinate descent: \citet{nair2024cdquant} only has results on the closed source PaLM-2 models with no released code, and \citet{behdin2023quantease} has results on the OPT, BLOOM, and Falcon model families.

There was more work on rounding methods several years ago, before the LLM boom.
These papers were typically on smaller vision models. %
The line of work was started by AdaRound~\citep{nagelround} and continuing to AdaQuant~\citep{adaquant} and BRECQ~\citep{brecq} employ a similar approach to ours, optimizing essentially interpolation variables between the closest up($\wup$) and down($\wdown$) quantization grid points, while adding a concave regularization term to encourage rounding and using a rectified sigmoid to interpolate between $\wup$ and $\wdown$. They also do rounding layer by layer. However our method uses a linear term as a regularizer inspired from our theoretical insights using discrepancy theory and uses simple linear interpolation between $\wup$ and $\wdown$ and we round the entire model at once.

\begin{figure}[t]
\begin{minipage}[t]{0.35\linewidth}
\centering
\includegraphics[width=\linewidth]{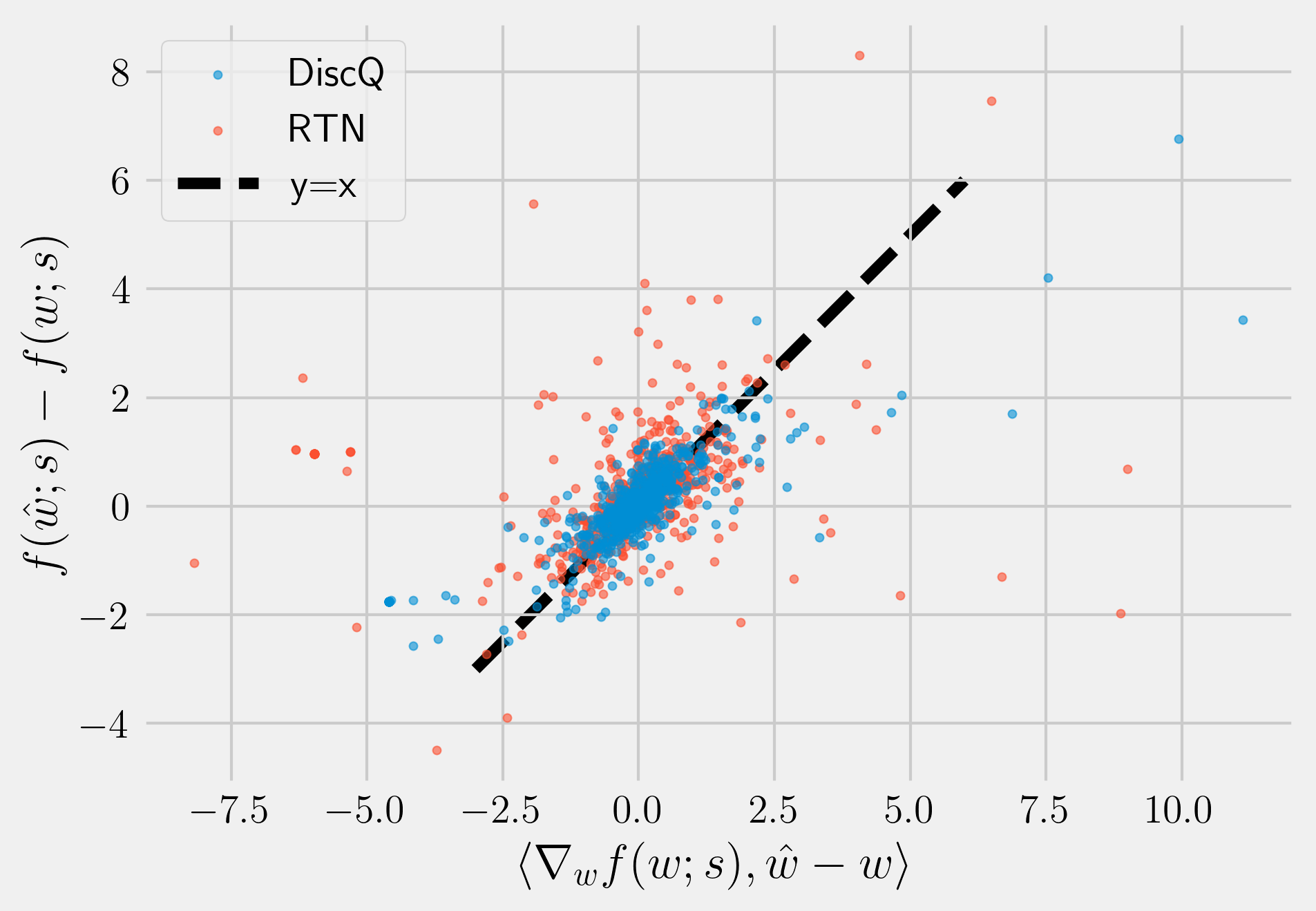}
\caption{First order approximation of the error function $\Delta f$ when quantizing the model to 4.25 bits using RTN and DiscQuant. Here $f$ is the per-token loss function and $s$ is sampled from the WikiText-2 dataset.} 
\label{fig:lin_approx}
\end{minipage}\hfill
\begin{minipage}[t]{0.63\linewidth}
\centering
\includegraphics[width=0.5\linewidth]{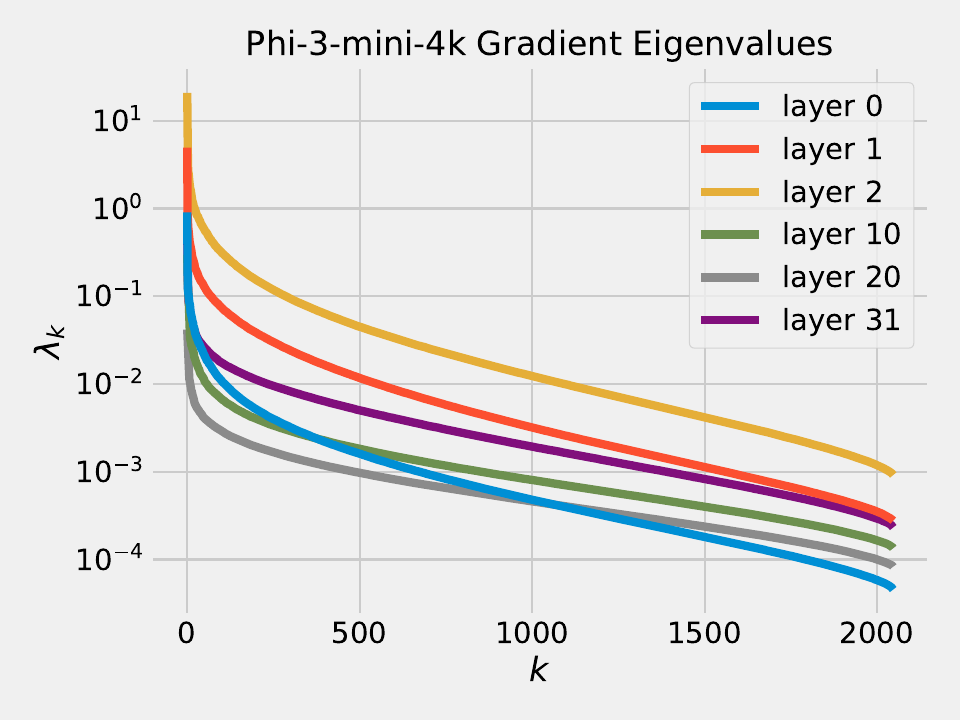}\hfill
\includegraphics[width=0.5\linewidth]{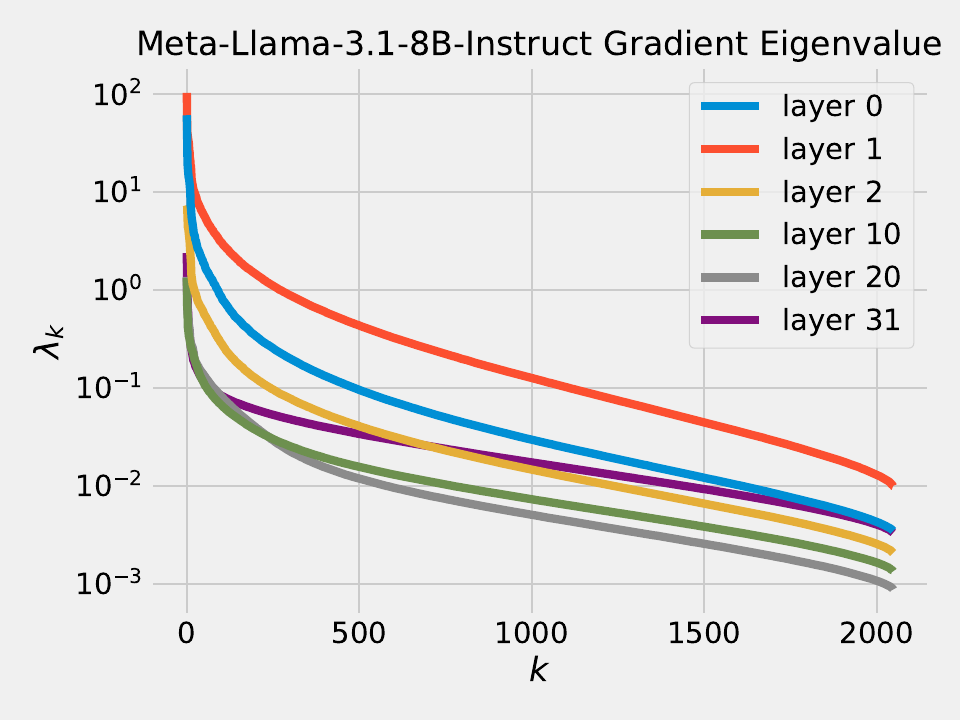}
\caption{Eigenvalues of the covariance matrix of the gradients of pre-trained models. The covariance matrix is estimated by averaging over $8k$ sample gradients from RedPajama-1T-Sample and projecting them to $2048$ dimensions using Johnson-Lindenstrauss projections.}
\label{fig:grad_eig}
\end{minipage}
\end{figure}

\subsection{Discrepancy Theory}
Discrepancy theory is a deep branch of mathematics and theoretical computer science, and we refer the readers to standard textbooks for more details \citep{matousek2009geometric,chazelle2004discrepancy,bansal2022discrepancy}
To our knowledge, only \citet{lybrand2021greedy} makes the connection between discrepancy theory and quantization.
However, besides the high level motivational similarities, their work is not directly relevant to ours. \cite{lybrand2021greedy} reduce the problem of understanding the error introduced by quantization on the output of a single neuron to a problem in discrepancy, and construct an algorithm for quantizing a single neuron. 
Their theoretical analysis on the generalization error only applies to quantizing the first layer of a neural network.
On the other hand, we use discrepancy theory to understand when the whole network $f(w;s)$ can be approximated by $f(\hw;s)$ with $\hw$ in the quantization grid, and our theory holds for any network as a whole as long as our assumptions are true.

\section{Connections to Discrepancy Theory}
\label{sec:method}

\begin{wraptable}{r}{6cm} %
\centering
\begin{tabular}{lll}
\toprule
Model & $\|\E(g)\|^2$ & $\E\|g\|^2$ \\
\midrule
Phi3-mini-128k & 0.1021 & 4.7812 \\
\hline
Llama3.1-8B & 1.6328 & 107 \\
\bottomrule
\end{tabular}

\caption{$\|\E(g)\|^2$ vs $\E\|g\|^2$ over $8192$ samples from RedPajama-1T-Sample dataset with window size $2048$.
}
\label{tab:norms}
\end{wraptable}

Let $f(w;s)$ be the loss function of a pre-trained neural network with weights $w\in \R^n$ on an input sample $s$ and let $\data$ be the sample data distribution. Suppose we are also given a (scalar) quantization grid $\cQ=Q_1\times Q_2 \times \dots \times Q_n$ where $Q_j\subset \R$ is a finite set of quantization points available to quantize the $j^{th}$ parameter.\footnote{The quantization grid $\cQ$ can depend on $w$, like in Block Scaling~\citep{frantar2023gptq}. So ideally, we should write $\cQ_w$, but we ignore the dependence to simplify notation.} In this work, we focus on scalar quantization which allows us to write the quantization grid as a product set, i.e., each parameter can be independently rounded to a finite set of available values. 
Alternatively, in vector quantization a group of $d$ variables are rounded together to one of a finite set of quantization points in $\R^d$, which has been used in some prior works~\citep{quipsharp,aqlm,vanbaalen-gptvq}. Generalizing our method to vector quantizers is an interesting future research direction.

Our goal is to find a rounding $\hw\in \cQ$ of the original weights $w$ such $f(\hw;s)\approx f(w;s)$ where $s\sim \data$. We further impose the constraint that for each parameter $w_j$, we only round up or round down to the available values in $Q_j$, i.e., we only have two choices for $\hw_j$ denoted by $\wup_j,\wdown_j\in Q_j$ where $\wup_j \le w_j \le \wdown_j$.\footnote{If $w_j<\min Q_j$ or $w_j>\max Q_j$, we just set $\wup_j=\wdown_j=\min Q_j$ or $\max Q_j$ respectively.} 
We make this assumption because we don't want to change any parameter of the original model too much during quantization, consider it an important property of algorithms we design.
Using Taylor expansion:
\begin{equation}
    \label{eqn:taylor}
    \Delta f = f(\hw;s)-f(w;s)=\inpro{\grad_w f(w;s)}{\hw-w}+(\hw-w)^T \grad_w^2 f(w;s)(\hw-w)+\cdots
\end{equation}
Assuming that the quantization grid $\cQ$ is fine enough and since we only round each parameter up or down, $\norm{\hw-w}$ is small and so we can ignore the higher order terms. We claim that the first order term is the dominant term. Prior works such as \citet{nagelround,hassibi1993obs,lecun1989optimal}
have assumed that the first order term can be assumed to be zero because the model is trained to convergence and focused on reducing the second order term. But the model being trained to convergence just means that average gradient over many samples from the distribution is nearly zero. But the gradients still have some variance and gradients w.r.t. individual samples from the data distribution are not approximately zero (see Table~\ref{tab:norms}). Figure \ref{fig:lin_approx} demonstrates this by showing that the error term $\Delta f$ is well-correlated with the first order approximation $\inpro{\grad_w f(w;s)}{\hw-w}$.\footnote{In the special case when $f$ is the KL distillation loss between the original model and quantized model, the first order term vanishes exactly. See Section~\ref{sec:discquant} for why this analysis still applies.}

So the goal now is to find a rounding $\hw$ such that $\inpro{\grad_w f(w;s)}{\hw-w}\approx 0$ for samples $s\sim \data.$ Suppose we sample $m$ samples $s_1,s_2,\dots,s_m \sim \data$ independently from the data distribution, where $m\ll n$. We now break our task into two parts of bounding the empirical error and generalization error as follows:
\begin{question}
\label{ques:emp_error}
    Can we find $\hw\in \cQ$ (with $\hw_j \in \{\wdown_j,\wup_j\}$) such that $\inpro{\grad_w f(w;s_i)}{\hw-w}\approx 0$ for all the samples $s_1,\dots,s_m$?
\end{question}
\begin{question}
\label{ques:gen_error}
    Once we find such a $\hw$, will it generalize to the true data distribution, i.e., will $\inpro{\grad_w f(w;s)}{\hw-w}\approx 0$ for $s\sim \data$? How many samples $m$ do we need for this?
\end{question}
\subsection{Bounding empirical error (Question \ref{ques:emp_error})}
For simplicity, let us assume that the quantization grid is uniform and $\wup_i-\wdown_i=\delta$ for all $i\in [n]$ where $\delta>0$ is the distance between grid points. See Appendix~\ref{sec:nonuniformgrid} for how to genealize this to non-uniform grids. We will introduce new parameters $x\in [0,1]^n$ and define $w^x=\wdown + \delta x$. Note that $w^x_i$ interpolates between $\wdown_i$ and $\wup_i$ where $w_i=\wdown_i$ if $x_i=0$ and $w_i=\wup_i$ if $x_i=1$. Let $y\in [0,1]^n$ be the interpolation point corresponding to the original weights, i.e., $w^y=w$. We can rewrite the linear constraints in terms of $x$ as follows:
\begin{align*}
    \inpro{\grad_w f(w;s_i)}{w^x-w} = \inpro{\grad_w f(w;s_i)}{w^x-w^y}
    =\delta \inpro{\grad_w f(w;s_i)}{x-y}.
\end{align*}
Let $M$ be an $m\times n$ matrix whose $i^{th}$ row is given by $\grad_w f(w;s_i)$. Then the linear constraints can be simply written as $M(x-y)=0$. Our goal is to find a fully integral $\hx\in \{0,1\}^n$ such that $M(\hx-y)=0.$ Let $V=\{x\in \R^n:Mx=My\}$ which is an affine subspace of dimension $\ge n-m$. Define $K=[0,1]^n \cap V$ as the intersection of the hypercube with this subspace. $K$ is a convex polytope and it is non-empty because $y\in K.$ Therefore any vertex of $K$ should have $n-m$ integral coordinates (i.e., coordinates $j$ such that $x_j\in \{0,1\}$).\footnote{This is because at a vertex, we need to have $n$ tight constraints, and $V$ imposes only $m$ tight constraints. So the remaining $n-m$ tight constraints should come from the hypercube. These are also called basic feasible solutions in linear programming.} 

See Figure~\ref{fig:cube} for geometric intuition about why this is true. Since the number of parameters $n$ is much larger than the number of samples $m$, any vertex of $K$ is almost fully integral and exactly satisfies all the $m$ linear constraints.

Suppose we further ask for a fully integral $\hx$ which approximately satisfies all the $m$ linear constraints, this precise question is answered by \emph{discrepancy theory} which studies how to do this and relates the approximation error to properties of $M$ such as \emph{hereditary discrepancy} \citep{lovasz1986discrepancy,bansal2022discrepancy}. 
We don't explore this direction further because the almost integral $\hx$---a vertex of $K$---is good enough if we apply RTN to the few remaining fractional parameters; we observe that the linear constraints are all approximately satisfied.

\subsection{Bounding Generalization Error (Question \ref{ques:gen_error})}
How do we bound the generalization error if we know that the empirical approximation error is small? If $\hw-w$ is approximately orthogonal to $m$ sample gradients $\grad_w f(w;s_i)$ for $i=1$ to $m$, why should we expect that $\hw-w$ is orthogonal to unseen gradients $\grad_w f(w;s)$ for samples $s\sim \data$? This should happen only if the gradients are approximately low rank. More precisely, let $$\Sigma=\E_{s\sim \data}[gg^T] \text{ where } g=\grad_w f(w;s)$$ be the covariance matrix of the distribution of sample gradients and let $\lambda_1 \ge \lambda_2\ge \dots\ge \lambda_n$ be its eigenvalues. We observe that the eigenvalues decay very fast, see Figure~\ref{fig:grad_eig} for empirical validation of this on some real world models. We model this by assuming that $\lambda_k \le \lambda_1/k^\alpha$ for $\alpha>1$. The assumption that $\alpha>1$ is valid since $$\E_s[\norm{g}^2]=\E_s[\Tr(gg^T)]=\Tr(\E_s[gg^T])=\Tr(\Sigma)=\sum_{i=1}^n \lambda_i.$$ It is well-known that the gradients of a pretrained model have constant norm on most samples (see Table~\ref{tab:norms} for empirical validation). Therefore $\sum_{i=1}^n \lambda_i=O(1)$ and so the the decay coefficient $\alpha$ has to be at least 1.

Under this assumption, it is reasonable to expect generalization. But this is not at all obvious to find a generalizing solution. In fact, any deterministic algorithm which chooses one of the vertices of $K$ will most likely not generalize. We give a randomized rounding algorithm (see Algorithm~\ref{alg:RoundingAlg_LM}) based on the famous Lovett-Meka algorithm from discrepancy theory~\citep{LovettMekaFOCS12} which finds a vertex of $K$ which has low generalization error. The algorithm starts at $y$ and does a random walk (Brownian motion) inside the $n-m$ dimensional subspace $V$ formed by the linear constraints imposed by the $m$ samples. Whenever it hits a face $x_i=0$ or $x_i=1$ of the hypercube, it fixes that variable and continues the random walk until almost all the variables are rounded.

In order to prove rigorous bounds we also need a mild assumption that the distribution of gradients is well-behaved. We use the notion by 
\citet{ODonnellBook2014} and say that for a parameter $\beta \geq 1$, a random vector $X \in \setR^n$ is \emph{$\beta$-reasonable} if
\[
 \E[\left<X,\theta\right>^4] \leq \beta \cdot \E[\left<X,\theta\right>^2]^2 \quad \forall \theta \in \setR^n.
\]
For example $X \sim \{ -1,1\}^n$ and a Gaussian $X \sim N(\bm{0},\Sigma)$ are both $O(1)$-reasonable. Our main theoretical result (proved in Appendix~\ref{sec:theory}) is then:
\begin{theorem} \label{thm:MainTheorem}
  Let $\alpha > 1$ and $\beta \geq 1$ be constants and let $1 \leq m \leq \frac{n}{16}$. Let $\mathcal{D}$ be a $\beta$-reasonable distribution with unknown covariance matrix $\Sigma \in \mathbb{R}^{n \times n}$ whose Eigenvalues satisfy $\lambda_k \leq \frac{\lambda_1}{k^{\alpha}}$ for all $k=1,\ldots,n$.
  Then there is a randomized polynomial time algorithm that given 
  a $y \in [0,1]^n$ and $m$ independent samples $g_1,\ldots,g_m \sim \mathcal{D}$, produces an $x \in [0,1]^n$ with high probability such that
all but $O(m)$ parameters in $x$ are fully rounded and $$\E_{g\sim\cD}[\inpro{g}{x-y}^2]=(x-y)^T\Sigma(x-y) \lesssim_{\alpha,\beta} \lambda_1 m^{-\min\{1/2,\alpha-1\}} (\log n)^2.$$
\end{theorem}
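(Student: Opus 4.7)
My approach is to execute a Lovett--Meka style Brownian motion inside the polytope $K = [0,1]^n \cap V$ and then bound the generalization error by splitting the spectrum of $\Sigma$ into a head and a tail.

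First, I would define a walk $x(t)$ with $x(0)=y$ evolving as $dx(t) = P_{U(t)^\perp}\,dB_t$, where $B_t$ is standard Brownian motion in $\R^n$ and $U(t) = \mathrm{span}(g_1,\dots,g_m) + \mathrm{span}\{e_i : x_i(t)\in\{0,1\}\}$. Coordinates are frozen once they hit $\{0,1\}$, so the walk stays in $K$ and the constraints $g_i^T(x(t)-y)=0$ are preserved at every time $t$. A standard Lovett--Meka martingale argument (each free coordinate is a bounded Brownian motion in $[0,1]$ whose variance grows linearly in $t$) shows that with high probability only $O(m)$ coordinates remain free after running the walk in phases for time $T=O(\log n)$, producing a feasible $x$ with $n-O(m)$ coordinates fully rounded.

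Second, to control $\E[(x-y)^T\Sigma(x-y)]$ I would expand in the eigenbasis $\Sigma = \sum_k \lambda_k v_k v_k^T$. Since $x-y$ is orthogonal to $G:=\mathrm{span}(g_1,\ldots,g_m)$, each $v_k$ may be replaced by $u_k := P_{G^\perp}v_k$. Applying It\^o's formula to the martingale $\langle u_k, x(t)-y\rangle$ gives $\E[\langle v_k,x(T)-y\rangle^2]\le T\cdot \|u_k\|^2$, because the instantaneous variance in any direction $u_k$ is at most $\|P_{U(t)^\perp}u_k\|^2\le\|u_k\|^2$. Summing against $\lambda_k$ yields
\begin{equation*}
\E\bigl[(x-y)^T\Sigma(x-y)\bigr]\ \lesssim\ (\log n)\cdot \E_{g_1,\ldots,g_m}\!\left[\Tr(\Sigma P_{G^\perp})\right],
\end{equation*}
reducing the problem to how much of the spectral mass of $\Sigma$ leaks out of the random subspace $G$.

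Third, I would split the trace at a threshold $K^*$. The tail $k>K^*$ contributes at most $\sum_{k>K^*}\lambda_k\lesssim \lambda_1 (K^*)^{1-\alpha}$ by the eigenvalue decay. For the head $k\le K^*$, I invoke matrix concentration on the empirical covariance $\hat\Sigma = \tfrac{1}{m}\sum_i g_ig_i^T$: the $\beta$-reasonable condition is exactly the fourth-moment assumption needed for a matrix Bernstein / truncation argument, giving $\|\hat\Sigma-\Sigma\|\lesssim_\beta \lambda_1(\log n)/\sqrt{m}$ with high probability. A Davis--Kahan style perturbation bound then forces $G$ to approximately contain the top-$K^*$ eigenspace whenever $K^*\lesssim m$, which in turn controls $\sum_{k\le K^*}\lambda_k\|P_{G^\perp}v_k\|^2$ by roughly $\lambda_1 K^*/\sqrt{m}$. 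Optimizing $K^*$ over the two regimes $\alpha\le 3/2$ and $\alpha>3/2$ produces the claimed bound $\lambda_1 m^{-\min\{1/2,\alpha-1\}}(\log n)^2$.

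The main obstacle is the quantitative head bound: a black-box Davis--Kahan step wants a spectral gap at $K^*$ that need not exist under only polynomial decay. The cleaner route is to directly control $\E[\Tr(\Sigma P_{G^\perp})]$ via a comparison between $\Sigma$ and $\hat\Sigma$ in an appropriate Ky Fan type norm, using $\beta$-reasonableness for the fourth-moment input and an $\epsilon$-net over the unit sphere of the top-$K^*$ eigenspace (which produces one of the $\log n$ factors, the other coming from the Brownian walk stopping time). With this covariance estimate in hand, the remaining pieces---Lovett--Meka termination, It\^o analysis of the individual eigendirections, and the tail sum---are routine.
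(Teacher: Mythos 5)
Your high-level reduction is correct and essentially matches the paper's: run a Lovett--Meka random walk inside $K=[0,1]^n\cap V$, observe that the walk stays orthogonal to $G=\mathrm{span}(g_1,\dots,g_m)$, iterate $O(\log(n/m))$ phases to get down to $O(m)$ fractional coordinates, and reduce the discrepancy bound to a statement about how much spectral mass of $\Sigma$ leaks into $G^{\perp}$. The observation $\E\big[(x-y)(x-y)^T\big]\preceq O(T)\,P_{G^\perp}$, hence $\E\big[(x-y)^T\Sigma(x-y)\big]\lesssim T\cdot \E\big[\Tr(\Sigma P_{G^\perp})\big]$, is sound. The missing link you need---and the step where the proposal genuinely breaks---is the bound on $\E\big[\Tr(\Sigma P_{G^\perp})\big]$.

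Your head/tail split with Davis--Kahan does not survive scrutiny, and your own diagnosis (no spectral gap under polynomial decay) is the smaller of the two problems. Even granting your claimed head bound $\sum_{k\le K^*}\lambda_k\|P_{G^\perp}v_k\|^2\lesssim \lambda_1 K^*/\sqrt{m}$, optimizing $K^*$ against the tail $\lambda_1(K^*)^{1-\alpha}$ yields the balance point $K^*\approx m^{1/(2\alpha)}$ and a total of order $\lambda_1 m^{(1-\alpha)/(2\alpha)}$, which for \emph{every} $\alpha>1$ is strictly worse than the target $\lambda_1 m^{-\min\{1/2,\alpha-1\}}$. A head estimate linear in $K^*$ is simply too weak; the ``Ky Fan norm plus $\epsilon$-net'' remedy you sketch does not fill this hole because the difficulty is quantitative, not merely one of lacking a gap.

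The paper's route bypasses eigenvector perturbation entirely. Writing $X=\frac{1}{m}\sum_j g_jg_j^T$ and using $X P_{G^\perp}=0$, one gets
$\Tr(\Sigma P_{G^\perp})=\Tr((\Sigma-X)P_{G^\perp})\le \|\Sigma-X\|_{\mathcal{S}(1)}$,
so the whole problem collapses to bounding $\E\big[\|X-\Sigma\|_{\mathcal{S}(1)}\big]$ (Proposition~\ref{prop:Schatten1MatrixConcentration}). This is proved by working in the eigenbasis of $\Sigma$, partitioning indices into dyadic blocks $J_\ell=\{2^{\ell-1},\dots,2^{\ell}-1\}$, and estimating each block of $X-\Sigma$ via $\|A\|_{\mathcal{S}(1)}\le\sqrt{\mathrm{rank}(A)}\,\|A\|_F$ together with the entrywise variance bound $\E|X_{ij}-\Sigma_{ij}|^2\lesssim_\beta \lambda_i\lambda_j/m$ that $\beta$-reasonableness furnishes via Cauchy--Schwarz. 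The decisive structural input, which your proposal has no analogue of, is the rank bound $\mathrm{rank}(X_{J_\ell,J_k})\le\min\{m,2^{\ell}\}$ for off-diagonal blocks; this is what converts the crude Frobenius estimate into the sharp Schatten-1 rate, and it requires no spectral separation at all. Finally, the paper then invokes the coarser but sufficient property of the walk $\E\big[\langle M,(x-y)(x-y)^T\rangle\big]\lesssim\|M\|_{\mathcal{S}(1)}$ applied to $M=\Sigma-X$, rather than your sharper $P_{G^\perp}$ domination; either works once the Schatten-1 concentration is in hand, but without it neither route closes.
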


\section{DiscQuant: Algorithm}
\label{sec:discquant}
In this section, we will present \emph{DiscQuant}, a simple and practical algorithm for rounding inspired by the theoretical insights in Section~\ref{sec:method}. Instead of trying to approximate the loss function of the pre-trained model, i.e.,  $f(\hw;s)\approx f(w;s)$, we will instead take a distillation approach and try to minimize the KL divergence between the next token distribution of the original model and the quantized model. Let $p_w(\cdot|z_{<i})$ be the distribution of the next token predicted by the original model given prefix $z_{<i}$ where $z\sim \data$ is a sample from the data distribution. We want $\err(\hw)=\E_{z\sim \data}\E_i\KL{p_w(\cdot |z_{<i})}{p_{\hw}(\cdot |z_{<i})}\approx 0$.

Expanding $\err(\hw)$ using Taylor series, we can see that first order term vanishes exactly and so the second order term is the dominant term (see Appendix~\ref{sec:taylor_kl}). By Lemma~\ref{lem:taylor_kl}, Hessian of $\err(\hw)$ can be written as a covariance of gradients as:$$H_w=\E_{z\sim \data}\E_i\E_{t\sim p_w(t|z_{<i})} \left[(\grad_w \log p_w(t|z_{<i})(\grad_w \log p_w(\cdot|z_{<i}))^T\right].$$ Therefore
$$\err(\hw) \approx (\hw-w)^T H_w (\hw-w)=\E_{z\sim \data}\E_i\E_{t\sim p_w(\cdot|z_{<i})} \left[\inpro{\grad_w \log p_w(t|z_{<i})}{\hw-w}^2\right].$$
So minimizing $\err(\hw)$ is a succinct way to impose constraints of the form $\langle \grad_w \log p_w(t|z_{<i}), \hat w - w \rangle \approx 0$ or equivalently $\log p_w(t|z_{<i}) \approx \log p_{\hw}(t|z_{<i})$ where $t\sim p_w(\cdot|z_{<i})$ and $z\sim \data$. %
Therefore, we can use the same techniques developed in Section~\ref{sec:method} to solve this as well. Assuming that the gradients are low rank, the set of $x$ satisfying these constraints (where $\hw=w^x$) form an affine subspace $V$ of dimension $\ge n-m$ where $m$ is the number of samples. We are again interested in finding a vertex of the polytope $K=[0,1]^n \cap V$ which will have $\ge n-m$ integral coordinates. At this point, we could use the Lovett-Meka algorithm (Algorithm~\ref{alg:RoundingAlg_LM}) which has provable generalization guarantees. But explicitly calculating all the gradients and storing them is infeasible. Instead a simple heuristic way to find a random vertex of polytope $K$ is to minimize a random linear function. Let $c\in \R^n$ be some arbitrary vector; we will try to minimize the linear function $\inpro{c}{x}$ along with the KL divergence by taking a linear combination of them. The final optimization objective is shown in (\ref{eqn:discquant_objective}) where $\lambda >0$ is a regularization coefficient.
\begin{equation}
\label{eqn:discquant_objective}
\begin{aligned}
    &\min_x \lambda \inpro{c}{x} + \E_{z\sim \data}\E_i[\KL{p_w(\cdot|z_{<i})}{p_{w^x}(\cdot|z_{<i})}]\\
    &s.t.\ x\in [0,1]^n. 
\end{aligned}
\end{equation}
We solve the optimization problem (\ref{eqn:discquant_objective}) using projected stochastic gradient descent where we project $x$ to the hypercube after every gradient update. Optimizing (\ref{eqn:discquant_objective}) will keep us close the polytope $K$ and will approximately converge to a vertex of $K$ which is almost integral. We round whatever fractional coordinates are left using RTN to get a fully integral solution.

We use one additional heuristic to improve the performance of the algorithm in practice. Instead of choosing a random vertex of the polytope $K$ by choosing the vector $c$ at random, we will choose it carefully so as to find the vertex of the polytope $K$ which is closest to $y$ which is the interpolation point corresponding to the original model weights (i.e., $y$ such that $w^y=w$).
We have:
\begin{align*}
    \norm{x-y}^2 = \sum_i (x_i^2 - 2x_iy_i + y_i^2)\approx \sum_i (x_i - 2x_iy_i + y_i^2)
    = \inpro{c^*}{x} + \norm{y}^2
\end{align*}
where $c^*=(1-2y)$. Here we have used the fact that $x_i^2=x_i$ whenever $x_i\in \{0,1\}$ and since $x$ is almost integral, we can use the approximation in the summation above. With this approximation, minimizing $\norm{x-y}^2$ over almost integral $x$ is equivalent to minimizing $\inpro{c^*}{x}$. So in the DiscQuant algorithm, we use $c=c^*$ specifically instead of a random $c.$

\section{Experiments}
\label{sec:experiments}

We evaluate our method on the Phi-3-mini-4k-instruct~\citep{abdin2024phi3technicalreporthighly} and Meta-Llama-3.1-8B-Instruct \citep{dubey2024llama3herdmodels} models, and compare against GPTQ and greedy rounding (i.e.\ round-to-nearest, or RTN).
We use the lm-evaluation-harness~\cite{lmeval} to evaluate on the Wikitext, GSM8k\_cot 8-shot, MMLU 5-shot, ARC\_Challenge 0-shot, PIQA 0-shot, HellaSwag 0-shot, and Winogrande 0-shot tasks.
We report standard errors from lm-evaluation-harness.
Wikitext measures perplexity, GSM8k is a generative task, and the remaining are multiple choice tasks.
Note that generative tasks are typically more difficult than multiple choice tasks, and better reflect how the models are used in practice.
See Appendix~\ref{sec:app:experiments} for details on the hardware used, and hyper-parameter settings.
Our method has similar memory requires as knowledge distillation, which also requires two copies of the model.
We do not perform inference timing experiments; DiscQuant can optimize over a given quantization grid, so that we can utilize any pre-existing inference optimizations.
For example, there are inference kernels for block scaling~\citep{frantar2024marlin} and incoherence processing~\citep{quipsharp}.
Ablations on the loss formulation are in Appendix~\ref{sec:app:experiments}.

\begin{table}[t]
\centering
\begin{tabular}{lllllllll}
\toprule
Method & Wbits & Wiki$\downarrow$ & GSM8k$\uparrow$ & MMLU$\uparrow$ & ArcC$\uparrow$ & PIQA$\uparrow$ & Hella$\uparrow$ & Wino$\uparrow$ \\
\midrule
--- & 16.0 & 9.5 & 84.4{\tiny $\pm$1.0} & 70.4{\tiny $\pm$0.4} & 56.7{\tiny $\pm$1.4} & 80.8{\tiny $\pm$0.9} & 77.4{\tiny $\pm$0.4} & 73.5{\tiny $\pm$1.2} \\
\hline
RTN & 3.0 & $6.3$E$5$ & 1.0{\tiny $\pm$0.3} & 23.3{\tiny $\pm$0.4} & 26.9{\tiny $\pm$1.3} & 53.4{\tiny $\pm$1.2} & 28.2{\tiny $\pm$0.4} & 48.6{\tiny $\pm$1.4} \\
GPTQ & 3.0 & 28.2 & 2.3{\tiny $\pm$0.4} & 37.7{\tiny $\pm$0.4} & 34.8{\tiny $\pm$1.4} & 64.3{\tiny $\pm$1.1} & 56.5{\tiny $\pm$0.5} & 52.6{\tiny $\pm$1.4} \\
DiscQ & 3.0 & 17.7 & 26.8{\tiny $\pm$1.2} & 45.6{\tiny $\pm$0.4} & 44.1{\tiny $\pm$1.5} & 73.9{\tiny $\pm$1.0} & 63.3{\tiny $\pm$0.5} & 66.6{\tiny $\pm$1.3} \\
\hline
RTN & 3.25 & 22.5 & 31.0{\tiny $\pm$1.3} & 53.2{\tiny $\pm$0.4} & 48.4{\tiny $\pm$1.5} & 72.5{\tiny $\pm$1.0} & 68.3{\tiny $\pm$0.5} & 62.6{\tiny $\pm$1.4} \\
GPTQ & 3.25 & 13.8 & 54.3{\tiny $\pm$1.4} & 59.0{\tiny $\pm$0.4} & 49.6{\tiny $\pm$1.5} & 77.3{\tiny $\pm$1.0} & 71.1{\tiny $\pm$0.5} & 66.5{\tiny $\pm$1.3} \\
DiscQ & 3.25 & 12.6 & 64.2{\tiny $\pm$1.3} & 60.7{\tiny $\pm$0.4} & 53.5{\tiny $\pm$1.5} & 78.7{\tiny $\pm$1.0} & 72.3{\tiny $\pm$0.4} & 72.5{\tiny $\pm$1.3} \\
\hline
RTN & 3.5 & 18.8 & 46.3{\tiny $\pm$1.4} & 57.0{\tiny $\pm$0.4} & 46.2{\tiny $\pm$1.5} & 73.8{\tiny $\pm$1.0} & 70.0{\tiny $\pm$0.5} & 63.9{\tiny $\pm$1.4} \\
GPTQ & 3.5 & 12.8 & 54.6{\tiny $\pm$1.4} & 61.7{\tiny $\pm$0.4} & 51.6{\tiny $\pm$1.5} & 78.9{\tiny $\pm$1.0} & 72.3{\tiny $\pm$0.4} & 68.3{\tiny $\pm$1.3} \\
DiscQ & 3.5 & 12.0 & 69.5{\tiny $\pm$1.3} & 63.0{\tiny $\pm$0.4} & 51.1{\tiny $\pm$1.5} & 78.9{\tiny $\pm$1.0} & 73.0{\tiny $\pm$0.4} & 73.9{\tiny $\pm$1.2} \\
\hline
RTN & 4.0 & 14.6 & 62.2{\tiny $\pm$1.3} & 61.2{\tiny $\pm$0.4} & 53.6{\tiny $\pm$1.5} & 76.3{\tiny $\pm$1.0} & 72.9{\tiny $\pm$0.4} & 65.3{\tiny $\pm$1.3} \\
GPTQ & 4.0 & 11.5 & 71.5{\tiny $\pm$1.2} & 65.1{\tiny $\pm$0.4} & 54.6{\tiny $\pm$1.5} & 78.8{\tiny $\pm$1.0} & 74.7{\tiny $\pm$0.4} & 70.9{\tiny $\pm$1.3} \\
DiscQ & 4.0 & 11.2 & 77.3{\tiny $\pm$1.2} & 65.7{\tiny $\pm$0.4} & 56.8{\tiny $\pm$1.4} & 79.5{\tiny $\pm$0.9} & 74.5{\tiny $\pm$0.4} & 72.0{\tiny $\pm$1.3} \\
\hline
RTN & 4.25 & 11.2 & 64.4{\tiny $\pm$1.3} & 67.5{\tiny $\pm$0.4} & 55.5{\tiny $\pm$1.5} & 79.3{\tiny $\pm$0.9} & 76.1{\tiny $\pm$0.4} & 69.1{\tiny $\pm$1.3} \\
GPTQ & 4.25 & 10.3 & 81.0{\tiny $\pm$1.1} & 68.5{\tiny $\pm$0.4} & 56.9{\tiny $\pm$1.4} & 79.7{\tiny $\pm$0.9} & 76.1{\tiny $\pm$0.4} & 72.1{\tiny $\pm$1.3} \\
DiscQ & 4.25 & 10.2 & 80.7{\tiny $\pm$1.1} & 68.4{\tiny $\pm$0.4} & 57.3{\tiny $\pm$1.4} & 80.7{\tiny $\pm$0.9} & 76.3{\tiny $\pm$0.4} & 74.2{\tiny $\pm$1.2} \\
\hline
RTN & 4.5 & 10.8 & 71.6{\tiny $\pm$1.2} & 67.7{\tiny $\pm$0.4} & 57.5{\tiny $\pm$1.4} & 79.3{\tiny $\pm$0.9} & 76.6{\tiny $\pm$0.4} & 72.2{\tiny $\pm$1.3} \\
GPTQ & 4.5 & 10.1 & 82.0{\tiny $\pm$1.1} & 68.8{\tiny $\pm$0.4} & 55.8{\tiny $\pm$1.5} & 80.8{\tiny $\pm$0.9} & 76.5{\tiny $\pm$0.4} & 71.8{\tiny $\pm$1.3} \\
DiscQ & 4.5 & 10.0 & 82.1{\tiny $\pm$1.1} & 68.5{\tiny $\pm$0.4} & 56.6{\tiny $\pm$1.4} & 80.2{\tiny $\pm$0.9} & 76.7{\tiny $\pm$0.4} & 74.2{\tiny $\pm$1.2} \\
\bottomrule
\end{tabular}

\caption{Phi-3-mini-4k-instruct. 
Across all tasks and bits, our method DiscQuant always achieves superior results over the baseline RTN and GPTQ methods.
On the ArcC, PIQA, and Wino tasks, DiscQuant achieves full recovery with at least 0.25 fewer bits per parameter than GPTQ and RTN.
}
\label{tab:phi3_block}
\end{table}

\begin{table}[t]
\centering
\begin{tabular}{lllllllll}
\toprule
Method & Wbits & Wiki$\downarrow$ & GSM8k$\uparrow$ & MMLU$\uparrow$ & ArcC$\uparrow$ & PIQA$\uparrow$ & Hella$\uparrow$ & Wino$\uparrow$ \\
\midrule
--- & 16.0 & 8.7 & 77.0{\tiny $\pm$1.2} & 68.0{\tiny $\pm$0.4} & 55.2{\tiny $\pm$1.5} & 81.3{\tiny $\pm$0.9} & 79.3{\tiny $\pm$0.4} & 73.7{\tiny $\pm$1.2} \\
\hline
RTN & 3.0 & $4.4$E$3$ & 0.5{\tiny $\pm$0.2} & 23.2{\tiny $\pm$0.4} & 22.3{\tiny $\pm$1.2} & 52.4{\tiny $\pm$1.2} & 29.1{\tiny $\pm$0.5} & 50.0{\tiny $\pm$1.4} \\
GPTQ & 3.0 & 23.2 & 3.6{\tiny $\pm$0.5} & 24.6{\tiny $\pm$0.4} & 31.8{\tiny $\pm$1.4} & 66.6{\tiny $\pm$1.1} & 45.8{\tiny $\pm$0.5} & 54.1{\tiny $\pm$1.4} \\
DiscQ & 3.0 & 15.2 & 14.3{\tiny $\pm$1.0} & 44.6{\tiny $\pm$0.4} & 39.4{\tiny $\pm$1.4} & 73.2{\tiny $\pm$1.0} & 64.4{\tiny $\pm$0.5} & 62.8{\tiny $\pm$1.4} \\
\hline
RTN & 3.25 & 15.2 & 10.8{\tiny $\pm$0.9} & 50.5{\tiny $\pm$0.4} & 44.3{\tiny $\pm$1.5} & 75.2{\tiny $\pm$1.0} & 71.4{\tiny $\pm$0.5} & 67.2{\tiny $\pm$1.3} \\
GPTQ & 3.25 & 10.7 & 56.3{\tiny $\pm$1.4} & 60.5{\tiny $\pm$0.4} & 46.3{\tiny $\pm$1.5} & 76.7{\tiny $\pm$1.0} & 74.4{\tiny $\pm$0.4} & 68.7{\tiny $\pm$1.3} \\
DiscQ & 3.25 & 10.5 & 58.3{\tiny $\pm$1.4} & 60.2{\tiny $\pm$0.4} & 49.1{\tiny $\pm$1.5} & 79.1{\tiny $\pm$0.9} & 75.1{\tiny $\pm$0.4} & 72.1{\tiny $\pm$1.3} \\
\hline
RTN & 3.5 & 12.7 & 35.9{\tiny $\pm$1.3} & 51.4{\tiny $\pm$0.4} & 48.4{\tiny $\pm$1.5} & 76.7{\tiny $\pm$1.0} & 73.0{\tiny $\pm$0.4} & 69.1{\tiny $\pm$1.3} \\
GPTQ & 3.5 & 10.4 & 57.0{\tiny $\pm$1.4} & 62.1{\tiny $\pm$0.4} & 49.9{\tiny $\pm$1.5} & 77.3{\tiny $\pm$1.0} & 75.1{\tiny $\pm$0.4} & 71.1{\tiny $\pm$1.3} \\
DiscQ & 3.5 & 10.3 & 60.7{\tiny $\pm$1.3} & 60.9{\tiny $\pm$0.4} & 51.7{\tiny $\pm$1.5} & 79.2{\tiny $\pm$0.9} & 76.3{\tiny $\pm$0.4} & 72.5{\tiny $\pm$1.3} \\
\hline
RTN & 4.0 & 12.5 & 50.8{\tiny $\pm$1.4} & 59.3{\tiny $\pm$0.4} & 50.5{\tiny $\pm$1.5} & 77.6{\tiny $\pm$1.0} & 74.7{\tiny $\pm$0.4} & 69.9{\tiny $\pm$1.3} \\
GPTQ & 4.0 & 9.9 & 63.2{\tiny $\pm$1.3} & 64.4{\tiny $\pm$0.4} & 52.4{\tiny $\pm$1.5} & 78.4{\tiny $\pm$1.0} & 75.9{\tiny $\pm$0.4} & 71.7{\tiny $\pm$1.3} \\
DiscQ & 4.0 & 9.8 & 66.5{\tiny $\pm$1.3} & 63.4{\tiny $\pm$0.4} & 51.6{\tiny $\pm$1.5} & 79.2{\tiny $\pm$0.9} & 76.9{\tiny $\pm$0.4} & 72.8{\tiny $\pm$1.3} \\
\hline
RTN & 4.25 & 9.4 & 70.6{\tiny $\pm$1.3} & 65.7{\tiny $\pm$0.4} & 54.2{\tiny $\pm$1.5} & 80.1{\tiny $\pm$0.9} & 78.0{\tiny $\pm$0.4} & 73.9{\tiny $\pm$1.2} \\
GPTQ & 4.25 & 9.1 & 74.6{\tiny $\pm$1.2} & 66.8{\tiny $\pm$0.4} & 53.4{\tiny $\pm$1.5} & 79.6{\tiny $\pm$0.9} & 77.9{\tiny $\pm$0.4} & 73.5{\tiny $\pm$1.2} \\
DiscQ & 4.25 & 9.1 & 74.9{\tiny $\pm$1.2} & 66.9{\tiny $\pm$0.4} & 53.6{\tiny $\pm$1.5} & 79.9{\tiny $\pm$0.9} & 78.4{\tiny $\pm$0.4} & 72.6{\tiny $\pm$1.3} \\
\hline
RTN & 4.5 & 9.3 & 71.9{\tiny $\pm$1.2} & 65.8{\tiny $\pm$0.4} & 54.8{\tiny $\pm$1.5} & 80.3{\tiny $\pm$0.9} & 78.4{\tiny $\pm$0.4} & 72.4{\tiny $\pm$1.3} \\
GPTQ & 4.5 & 9.0 & 73.8{\tiny $\pm$1.2} & 66.9{\tiny $\pm$0.4} & 53.6{\tiny $\pm$1.5} & 79.6{\tiny $\pm$0.9} & 78.1{\tiny $\pm$0.4} & 73.7{\tiny $\pm$1.2} \\
DiscQ & 4.5 & 9.1 & 74.8{\tiny $\pm$1.2} & 66.8{\tiny $\pm$0.4} & 54.1{\tiny $\pm$1.5} & 80.6{\tiny $\pm$0.9} & 78.7{\tiny $\pm$0.4} & 72.9{\tiny $\pm$1.2} \\
\bottomrule
\end{tabular}

\caption{Meta-Llama-3.1-8B-Instruct.
Our method DiscQuant achieves superior compression 
on the vast majority of quantization levels and tasks
over the baselines GPTQ and RTN.
}
\label{tab:lam3_block}
\end{table}

\subsection{Block Scaling}
Our first experiments use standard block scaling quantization, determined by a {\tt bits} and {\tt groupsize} parameter.
There are $2^{\tt bits}$ unique points, and every {\tt groupsize} parameters share a unique 16-bit scale parameter.
For example, 3.25 bits is achieved with {\tt bits=3, groupsize=64}.
We use the block scaling implementation from \citet{frantar2024marlin} which is symmetric linear quantization. 
Table~\ref{tab:phi3_block} shows the results quantizing Phi-3-mini-4k-instruct.
Across all tasks and all bit settings, our method DiscQuant achieves superior or comparable compression  over the baseline GPTQ and RTN methods.
The gap between DiscQuant and the baselines is greater at lower bits.
On the ARC\_Challenge, PIQA, and WinoGrade tasks, DiscQuant achieves full recovery with at least 0.25 fewer bits per parameter than GPTQ and RTN.
For example on ARC\_Challenge, DiscQuant achieves full recovery at 4 bits per weight, whereas GPTQ requires 4.25 bits, and RTN 4.5 bits.
DiscQuant achieves better compression on the more difficult generative GSM8k task: at 4 bits DiscQuant gets 77.3\% accuracy, while GPTQ gets 71.5\%, and RTN gets 62.2\%. 
Table~\ref{tab:lam3_block} shows the results quantizing Meta-Llama-3.1-8B-Instruct.
Overall the story is the same.
Our method DiscQuant achieves improved compression on the %
majority of quantization levels and tasks.
For example at 4 bits, DiscQuant gets 66.5\% GSM8k accuracy, while GPTQ gets 63.2\%, and RTN gets 50.8\%.

\begin{figure}[t]
\centering
\includegraphics[width=0.98\linewidth]{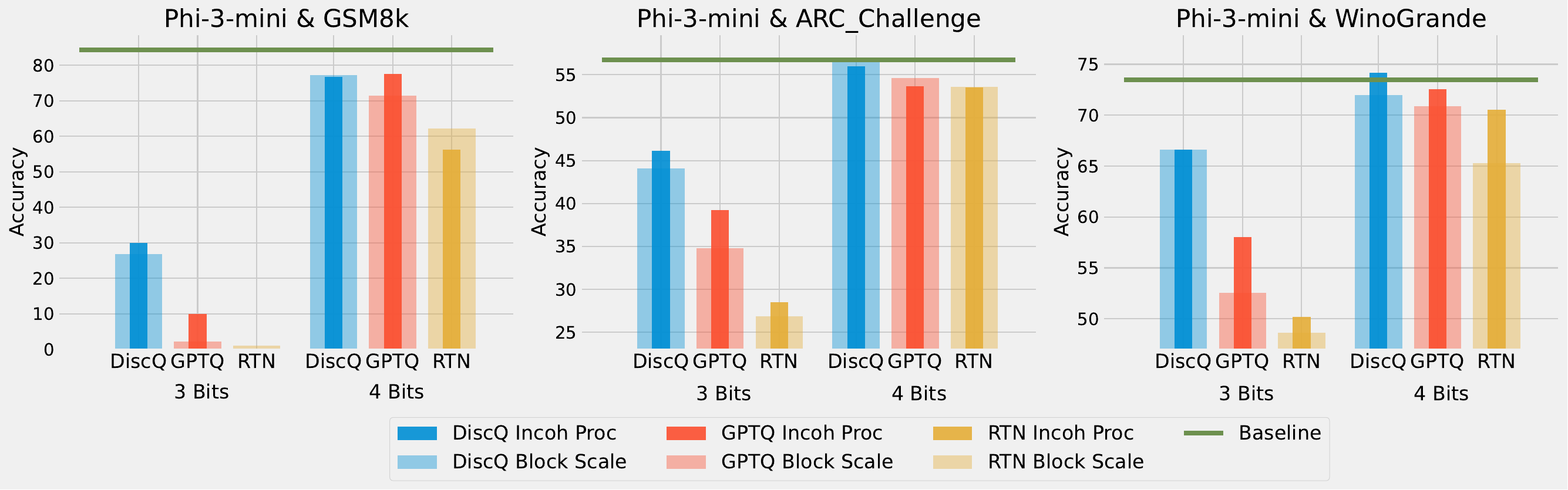}\hfill
\includegraphics[width=0.98\linewidth]{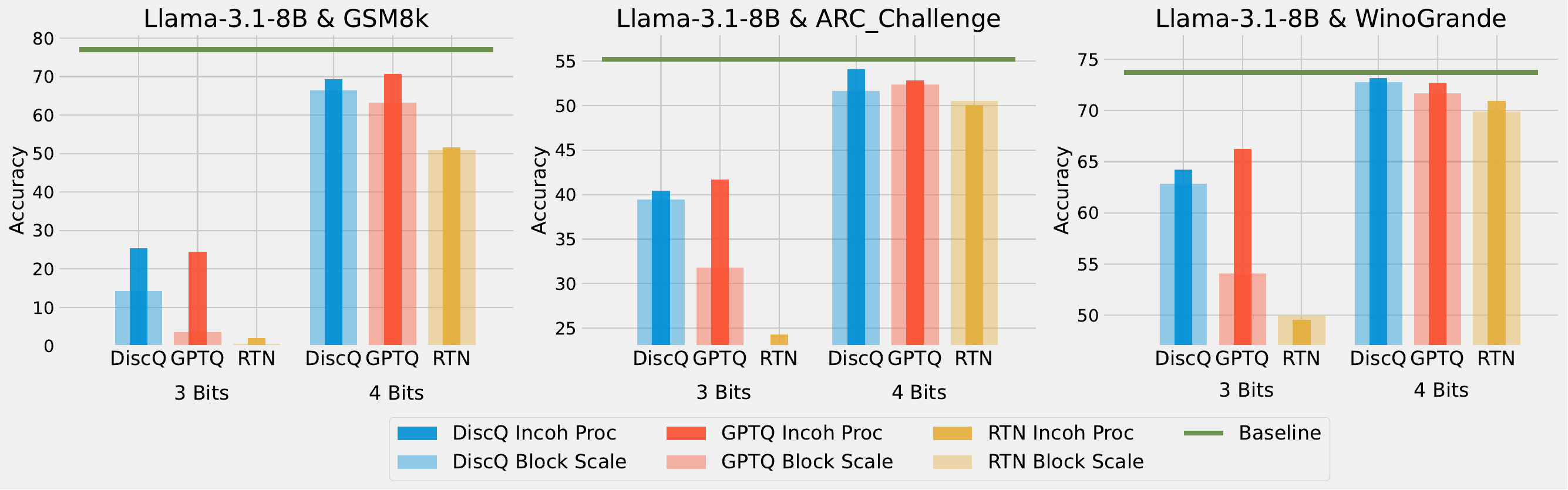}
\caption{Quantizing Phi-3-mini-4k-instruct and Meta-LLama-3.1-8B-Instruct with block scaling, and additional incoherence processing.
DiscQuant can compose with other quantization improvements, and with incoherence processing remains competitive with GPTQ.
}
\label{fig:phi3_lam3_incoh}
\end{figure}

\vspace{1em}
\subsection{Incoherence Processing}
We explore another quantization format to show that our method can compose with other quantization improvements.
Incoherence processing has been shown to improve quantization, especially at less than 4 bits per weight~\citep{chee2023quip}.
The weights are multiplied by certain random orthogonal matrices prior to quantization, which can reduce the range of the weights and make quantization easier.
We employ the Randomized Hadamard Transform from~\citet{quipsharp}.
We use the same block scaling quantization grid as in the previous subsection.
A subset of our results are shown in Figure~\ref{fig:phi3_lam3_incoh}, where we superimpose bar plots for block scaling and block scaling + incoherence processing.
In the majority of cases, adding incoherence processing increases the task accuracy, especially at lower bits.
We do not use fractional bits, (i.e. no {\tt groupsize}), due to the fact that both these methods effect outliers and can interfere with one another.
Incoherence especially helps GPTQ at 3 bits, and for Phi-3 DiscQuant without incoherence is competitive to GPTQ with incoherence.
For full results see Appendix~\ref{sec:app:experiments}.

\begin{figure}[t]
\centering
\includegraphics[width=0.3\linewidth]{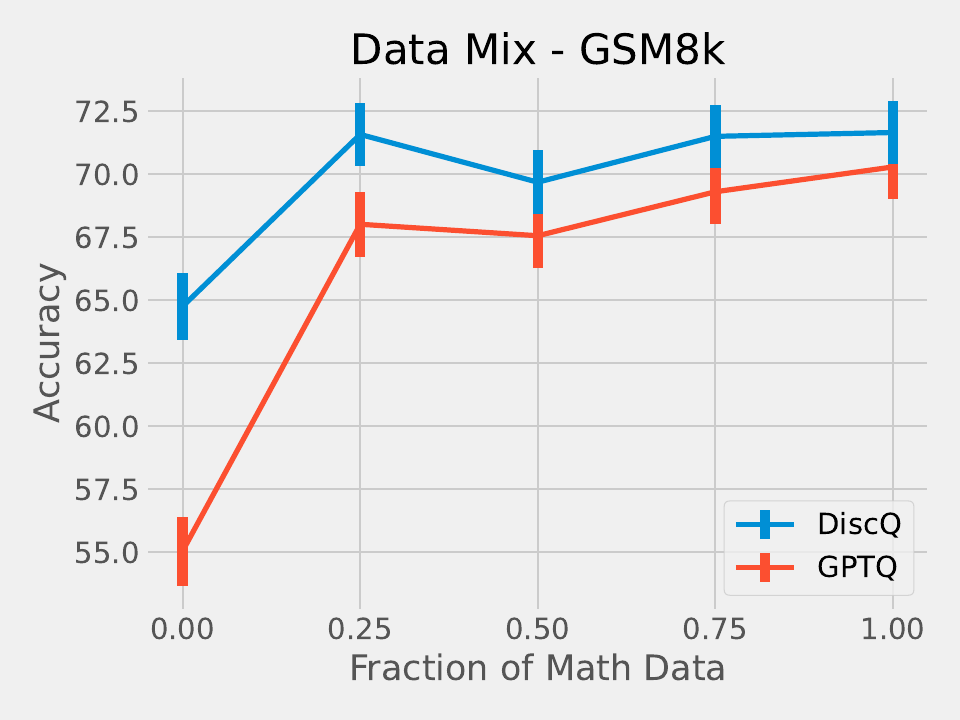} \hfill
\includegraphics[width=0.3\linewidth]{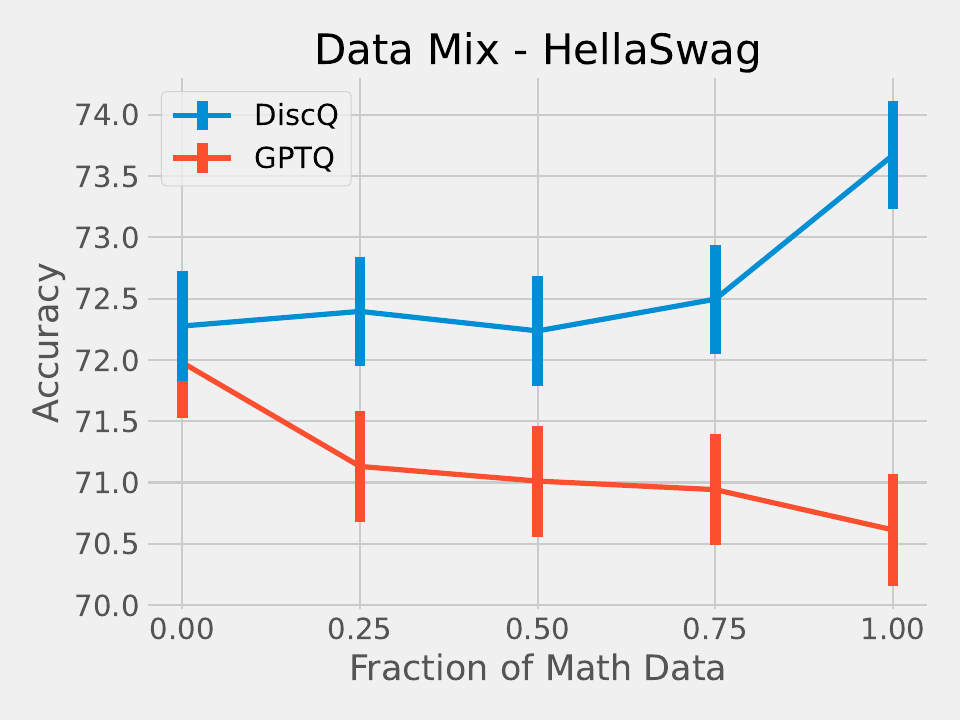} \hfill
\includegraphics[width=0.3\linewidth]{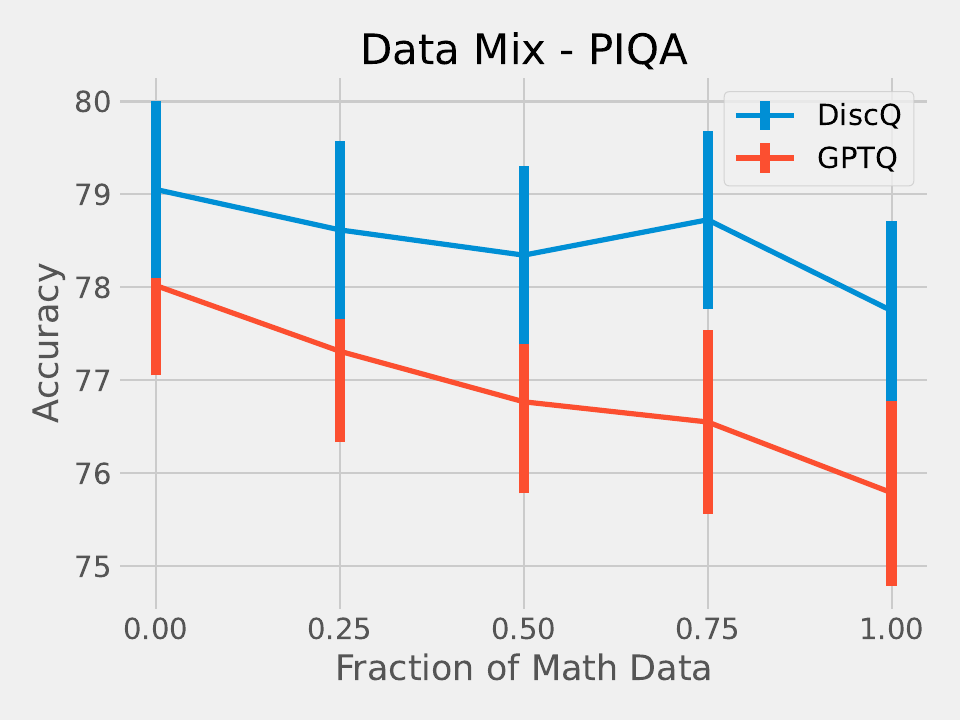}
\caption{Effect of increasing the fraction of math data when quantizing Phi-3-mini-4k-instruct at 3.25 bits.
For 8192 total samples, we use a fraction of math subject data (GSM8k \& MetaMathQA), and the remaining our standard RedPajama.
As expected, performance on GSM8k increases with more math data.
Expected behavior on the other tasks is unclear.
}
\label{fig:phi3_math_mix}
\end{figure}

\subsection{Effect of Data}
We perform a simple investigation into the effect of the dataset on quantization. 
We mix math subject data--GSM8k and MetaMathQA--with our standard RedPajama dataset.
Figure~\ref{fig:phi3_math_mix} shows the results of quantizing Phi-3-mini-4k-instruct at 3.25 bits with such a mix.
As expected, both methods increase accuracy on GSM8k when there is a greater fraction of math data.
On HellaSwag, DiscQuant improves with more math data, where GPTQ gets worse.
On PIQA, both methods get worse. %
See Appendix~\ref{sec:app:experiments} for all tasks.
There is a meaningful change in accuracy as a result of changing the data mix.
Choosing an appropriate data mix for quantization remains an important open question.

\newpage
{\small
\bibliography{references}
\bibliographystyle{plainnat}
}

\newpage
\appendix
\section{Additional Experiments}
\label{sec:app:experiments}

\begin{table}[t]
\centering
\begin{tabular}{lllllllll}
\toprule
Method & Wbits & Wiki$\downarrow$ & GSM8k$\uparrow$ & MMLU$\uparrow$ & ArcC$\uparrow$ & PIQA$\uparrow$ & Hella$\uparrow$ & Wino$\uparrow$ \\
\midrule
--- & 16.0 & 9.5 & 84.4{\tiny $\pm$1.0} & 70.4{\tiny $\pm$0.4} & 56.7{\tiny $\pm$1.4} & 80.8{\tiny $\pm$0.9} & 77.4{\tiny $\pm$0.4} & 73.5{\tiny $\pm$1.2} \\
\hline
RTN & 3.0 & $2.6$E$5$ & 0.0{\tiny $\pm$0.0} & 23.4{\tiny $\pm$0.4} & 28.5{\tiny $\pm$1.3} & 49.8{\tiny $\pm$1.2} & 26.0{\tiny $\pm$0.4} & 50.2{\tiny $\pm$1.4} \\
GPTQ & 3.0 & 20.8 & 10.0{\tiny $\pm$0.8} & 43.8{\tiny $\pm$0.4} & 39.2{\tiny $\pm$1.4} & 70.5{\tiny $\pm$1.1} & 60.7{\tiny $\pm$0.5} & 58.0{\tiny $\pm$1.4} \\
DiscQ & 3.0 & 16.7 & 29.9{\tiny $\pm$1.3} & 48.0{\tiny $\pm$0.4} & 46.2{\tiny $\pm$1.5} & 75.1{\tiny $\pm$1.0} & 64.5{\tiny $\pm$0.5} & 66.6{\tiny $\pm$1.3} \\
\hline
RTN & 4.0 & 15.9 & 56.3{\tiny $\pm$1.4} & 55.0{\tiny $\pm$0.4} & 53.5{\tiny $\pm$1.5} & 77.4{\tiny $\pm$1.0} & 68.8{\tiny $\pm$0.5} & 70.6{\tiny $\pm$1.3} \\
GPTQ & 4.0 & 11.0 & 77.6{\tiny $\pm$1.1} & 65.8{\tiny $\pm$0.4} & 53.7{\tiny $\pm$1.5} & 80.2{\tiny $\pm$0.9} & 74.9{\tiny $\pm$0.4} & 72.5{\tiny $\pm$1.3} \\
DiscQ & 4.0 & 11.0 & 76.7{\tiny $\pm$1.2} & 65.6{\tiny $\pm$0.4} & 56.0{\tiny $\pm$1.5} & 79.5{\tiny $\pm$0.9} & 74.9{\tiny $\pm$0.4} & 74.2{\tiny $\pm$1.2} \\
\bottomrule
\end{tabular}

\caption{Phi-3-mini-4k-instruct with incoherence processing.
At 3 bits per weight, DiscQuant achieves superior compression across all tasks.
At 4 bits per weight, DiscQuant achieves comparable compression.
}
\label{tab:phi3_quip}
\end{table}

\begin{table}[t]
\centering
\begin{tabular}{lllllllll}
\toprule
Method & Wbits & Wiki$\downarrow$ & GSM8k$\uparrow$ & MMLU$\uparrow$ & ArcC$\uparrow$ & PIQA$\uparrow$ & Hella$\uparrow$ & Wino$\uparrow$ \\
\midrule
--- & 16.0 & 8.7 & 77.0{\tiny $\pm$1.2} & 68.0{\tiny $\pm$0.4} & 55.2{\tiny $\pm$1.5} & 81.3{\tiny $\pm$0.9} & 79.3{\tiny $\pm$0.4} & 73.7{\tiny $\pm$1.2} \\
\hline
RTN & 3.0 & $2.4$E$3$ & 2.1{\tiny $\pm$0.4} & 25.2{\tiny $\pm$0.4} & 24.3{\tiny $\pm$1.3} & 54.7{\tiny $\pm$1.2} & 29.5{\tiny $\pm$0.5} & 49.6{\tiny $\pm$1.4} \\
GPTQ & 3.0 & 13.9 & 24.4{\tiny $\pm$1.2} & 49.7{\tiny $\pm$0.4} & 41.7{\tiny $\pm$1.4} & 73.1{\tiny $\pm$1.0} & 70.4{\tiny $\pm$0.5} & 66.2{\tiny $\pm$1.3} \\
DiscQ & 3.0 & 13.4 & 25.4{\tiny $\pm$1.2} & 51.5{\tiny $\pm$0.4} & 40.4{\tiny $\pm$1.4} & 73.2{\tiny $\pm$1.0} & 69.6{\tiny $\pm$0.5} & 64.2{\tiny $\pm$1.3} \\
\hline
RTN & 4.0 & 11.2 & 51.6{\tiny $\pm$1.4} & 59.5{\tiny $\pm$0.4} & 50.1{\tiny $\pm$1.5} & 78.9{\tiny $\pm$1.0} & 74.5{\tiny $\pm$0.4} & 71.0{\tiny $\pm$1.3} \\
GPTQ & 4.0 & 9.5 & 70.7{\tiny $\pm$1.3} & 64.9{\tiny $\pm$0.4} & 52.8{\tiny $\pm$1.5} & 80.0{\tiny $\pm$0.9} & 77.4{\tiny $\pm$0.4} & 72.7{\tiny $\pm$1.3} \\
DiscQ & 4.0 & 9.6 & 69.4{\tiny $\pm$1.3} & 63.7{\tiny $\pm$0.4} & 54.1{\tiny $\pm$1.5} & 80.7{\tiny $\pm$0.9} & 77.0{\tiny $\pm$0.4} & 73.2{\tiny $\pm$1.2} \\
\bottomrule
\end{tabular}

\caption{Meta-Llama-3.1-8B-Instruct with incoherence processing.
Across a majority of bits and tasks, DiscQuant achieves comparable compression with GPTQ, and does better than RNT.}
\label{tab:lam3_quip}
\end{table}

\subsection{Experimental Setup Details}
The experiments for the Phi-3-mini model were conducted on either a single 80GB Nvidia A100 GPU, or 2x40GB A100 GPUs, while the Llama-3.1-8B model used either 2x80GB A100s, or 4x40GB A100s.
We use the PyTorch framework.
We initialize $x \in [0,1]^n$ uniformly at random, and used AdamW~\citep{adamw} with a cosine learning rate schedule.
We multiply the regularization coefficient $\lambda$ with the KL loss term, and perform entry-wise gradient clipping on the KL loss term.
For DiscQuant, we tuned the hyper-parameters for each model and bit setting.
The hyper-parameters {\tt clamp, $\lambda$, lr, batch\_size, num\_iter} and {\tt warmup} were tuned.
In the block scaling setting we found that {\tt clamp=\{1.0, 0.5\}, $\lambda$=200, lr=\{0.1, 0.05\}, batch\_size=\{4,8\}, num\_iter=1024, warmup=128} worked well for both models.
In the incoherence processing setting we found that {\tt clamp=\{0.05,0.01\}, lr=\{0.05,0.01\}} worked well for both models, all other parameters being the same as before.
For GPTQ, we used the {\tt actorder, true\_sequential}  heuristics, and tuned the number of samples over {\tt \{1024, 4096, 8192\}} for each model and bit setting.
Our quantization dataset is constructed from the RedPajama-1T-Sample training set~\citep{rpv1}.
We concatenate random samples until up to 2048 sequence length, truncating the last sample if necessary.
Greedy or round-to-nearest requires no data, and no hyper-parameter tuning.

\begin{figure}[t]
\centering
\includegraphics[width=\linewidth]{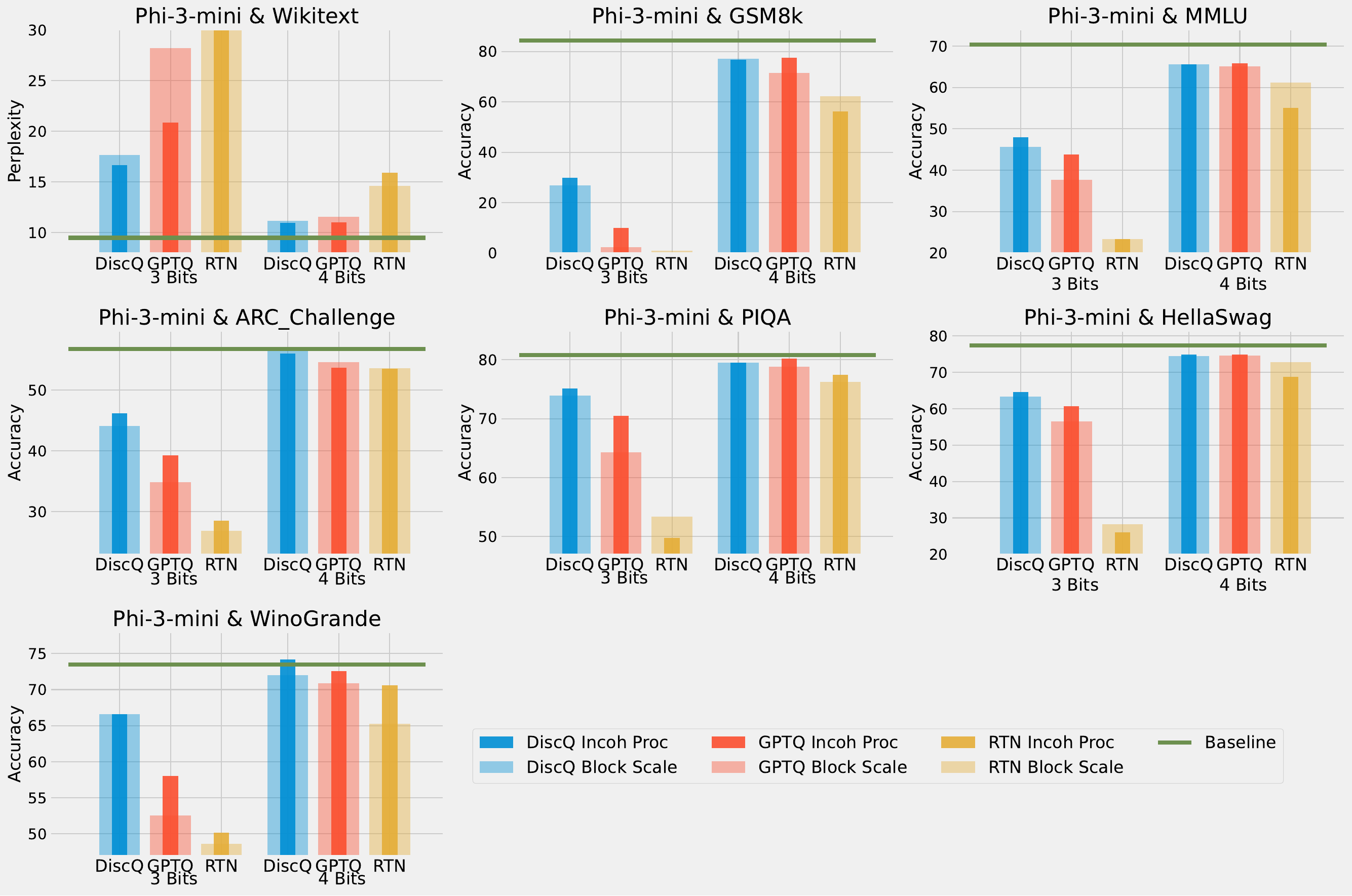}
\caption{Quantizing Phi-3-mini-4k-instruct with block scaling, and additional incoherence processing. 
Adding incoherence processing largely improves model quality at 3 bits.
At 4 bits, these improvements are smaller.
At 3 bits, DiscQuant is better than GPTQ with incoherence processing.
}
\label{fig:phi3_all_incoh}
\end{figure}

\begin{figure}[t]
\centering
\includegraphics[width=\linewidth]{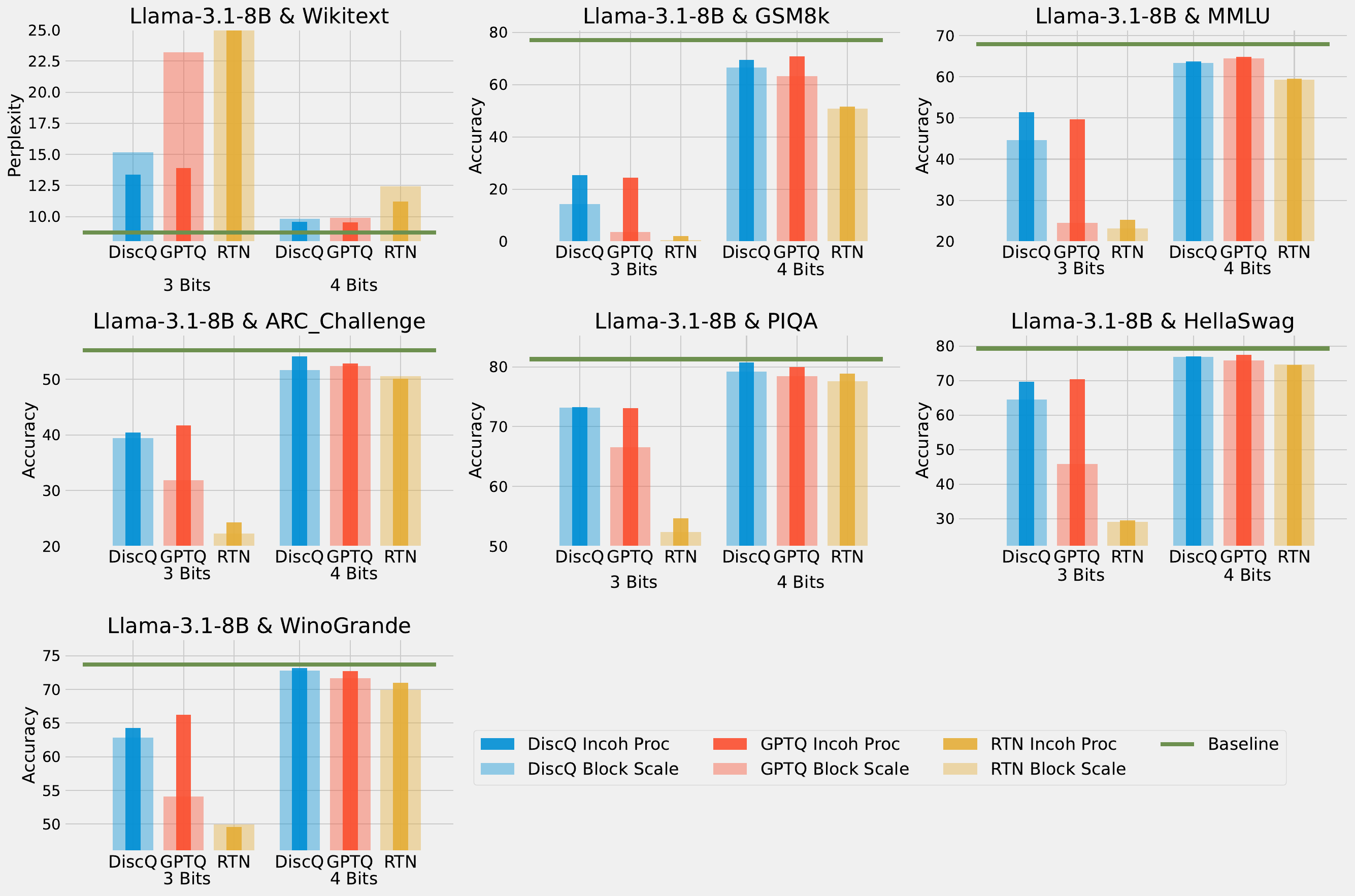}
\caption{Quantizing Meta-Llama-3.1-8B-Instruct with block scaling, and additional incoherence processing. 
Adding incoherence processing largely improves model quality at 3 bits.
At 4 bits, these improvements are smaller.
After incoherence, DiscQuant is largely comparable to GPTQ.
}
\label{fig:lam3_all_incoh}
\end{figure}

\subsection{Incoherence Processing}
Table~\ref{tab:phi3_quip} shows our results quantizing Phi-3-mini-4k-instruct with incoherence processing.
At 3 bits per weight, DiscQuant achieves superior compression across all tasks.
At 4 bits per weight, DiscQuant achieves comparable compression.
For example, on ARC\_CHallenge at 3 bits, DiscQuant achieves 46.2\% accuracy, while GPTQ achieves 39.2\%, and RTN 28.5\%.
Table~\ref{tab:lam3_quip} shows our results quantizing Meta-Llama-3.1-8B-Instruct with incoherence processing.
DiscQuant performs comparably to GPTQ, and better than RTN.
For example, on WinoGrande at 4 bits, DiscQuant achieves 73.2\% accuracy, while GPTQ achieves 72.7\%, and RTN 71.0\%.

Figures~\ref{fig:phi3_all_incoh} and~\ref{fig:lam3_all_incoh} show the results adding incoherence processing superimposed over just using block scaling.
Incoherence processing largely improves quantization at 3 bits across both models, whereas at 4 bits the improvements are smaller.
In the Phi-3 model at 3 bits, DiscQuant without incoherence is better than GPTQ with incoherence.
Across the other models and bit settings, DiscQuant and GPTQ are comparable after incoherence processing.

\begin{figure}
\centering
\includegraphics[width=0.32\linewidth]{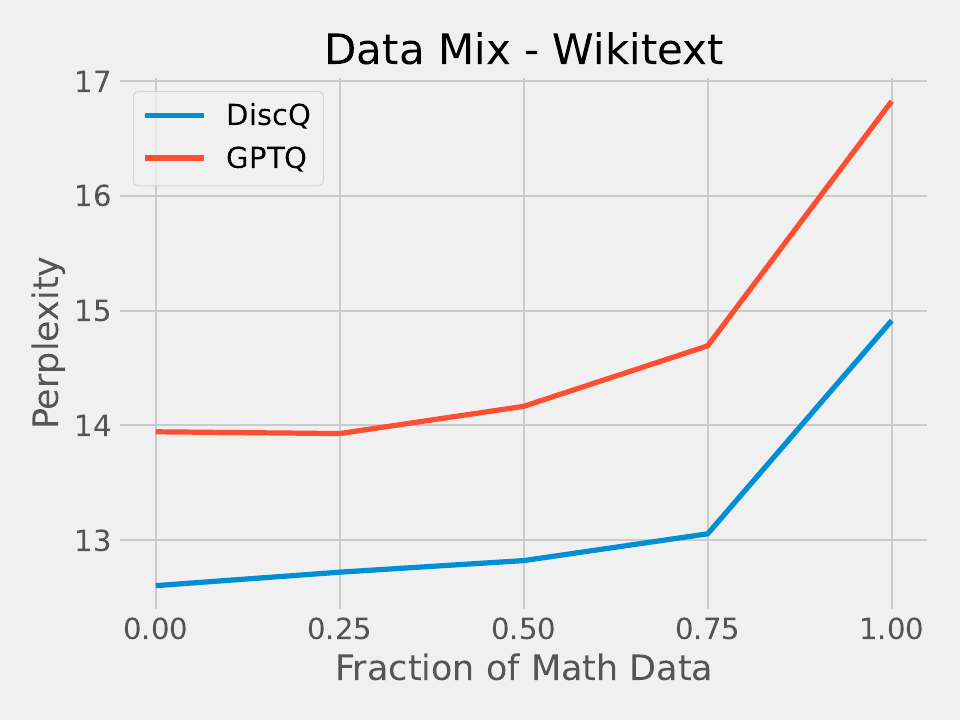} \hfill
\includegraphics[width=0.32\linewidth]{mix_figures/phi3_mix_gsm.pdf} \hfill
\includegraphics[width=0.32\linewidth]{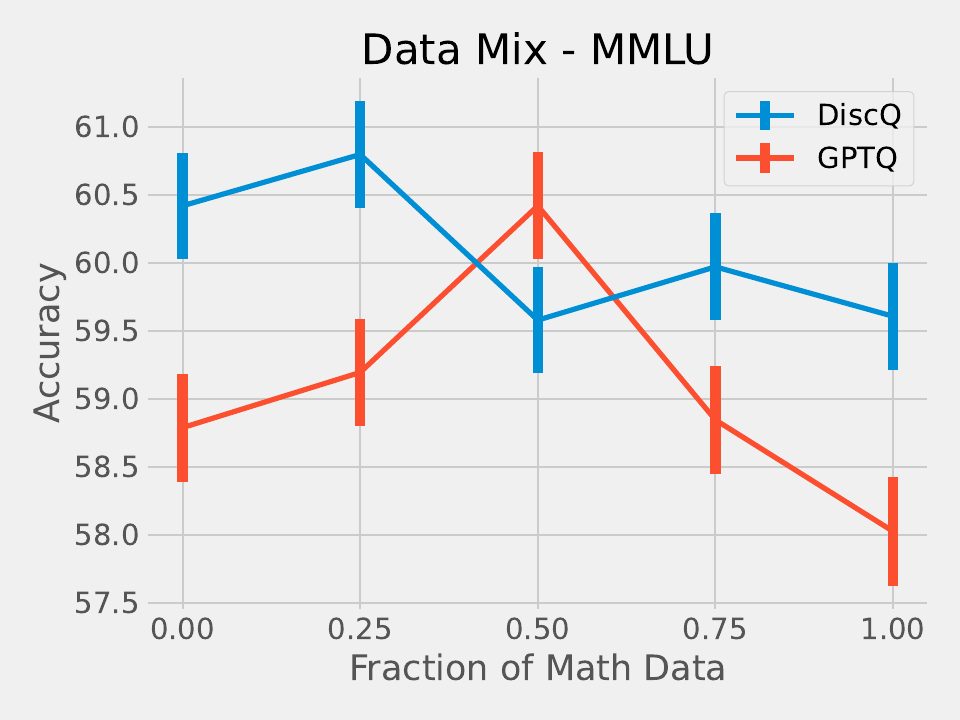} \hfill
\includegraphics[width=0.32\linewidth]{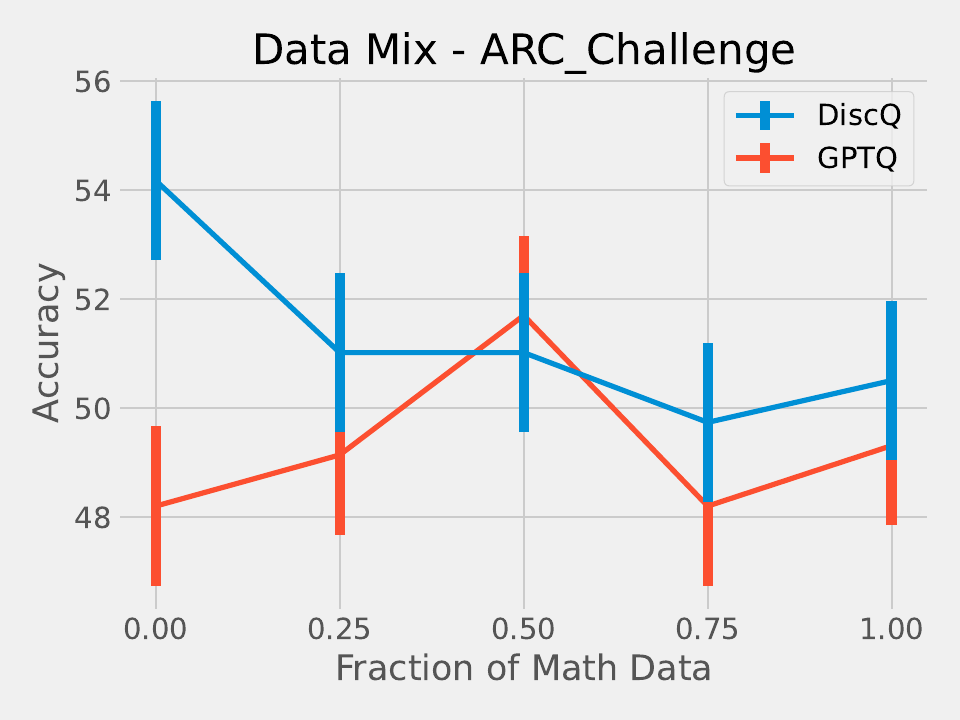} \hfill
\includegraphics[width=0.32\linewidth]{mix_figures/phi3_mix_piqa.pdf} \hfill
\includegraphics[width=0.32\linewidth]{mix_figures/phi3_mix_hella.pdf} \hfill
\includegraphics[width=0.32\linewidth]{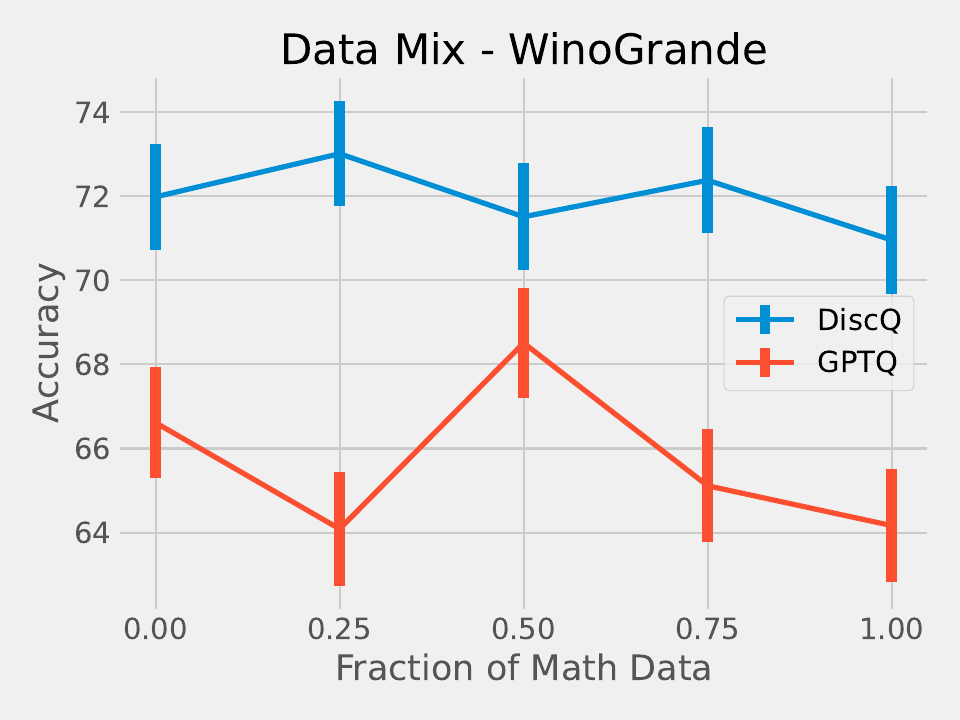} \hfill
\caption{Effect of increasing the fraction of math data when quantizing Phi-3-mini-4k-instruct at 3.25 bits.
For 8192 total samples, we use a fraction of math subject data (GSM8k and MetaMathQA), and the remaining our standard RedPajama.
Across all evaluations, there is a meaningful change as a result of changing the data mix.
}
\label{fig:math_mix_all}
\end{figure}

\begin{figure}[t]
\centering
\begin{tabular}{ccccc}
\toprule
KL Coeff & Intermed Coeff & Intermed Type & Wiki$\downarrow$ & GSM8k$\uparrow$ \\
\midrule
1.0 & 0.0 & None   & 12.8 & 64.9{\tiny$\pm$1.3} \\
0.0 & 1.0 & Layer  & 14.7 & 54.1{\tiny$\pm$1.4} \\
0.0 & 1.0 & Linear & 14.3 & 60.1{\tiny$\pm$1.4} \\
0.1 & 0.9 & Linear & 13.1 & 61.4{\tiny$\pm$1.3} \\
0.5 & 0.5 & Linear & 12.9 & 63.9{\tiny$\pm$1.3} \\
0.9 & 0.1 & Linear & 12.8 & 63.8{\tiny$\pm$1.3} \\
\bottomrule
\end{tabular}
\captionof{table}{Distillation Ablations. 
Quantizing Phi-3-mini-4k to 3.25 bits using a reduced 1024 samples of RedPajama.
We test affine combinations between the KL divergence loss and intermediate L2 loss, which is either between the linear or decoder layers.
Standard KL divergence does best.
}
\label{tab:distill_ablate}
\end{figure}

\subsection{Effect of Data}
Here we give the full set of evaluation tasks when changing the mix of math subject data when quantizing Phi-3-mini-4k-instruct to 3.25 bits. 
It is interesting that across all evaluation tasks, there is a meaningful change in evaluation metrics as a result of changing the data mix.
We leave the question of appropriate data curation as an important open question.

\subsection{Ablations}
We tried several distillation formulations, but ultimately chose a standard KL divergence between the outputs of the original and quantized model as the best approach.
See Table~\ref{tab:distill_ablate}.
We quantize Phi-3-mini-4k-instruct to 3.25 bits, using 1024 samples.
We tune the hyper-parameters as described at the beginning of this section.
Note that for these ablations we used fewer samples than in our main experiments.
In addition to the standard KL divergence, we tried several intermediate loss formulations for knowledge distillation.
We used a normalized L2 loss between the outputs of the teacher and student, either per decoder layer (Intermed Type = Layer), or between each linear layer (Intermed Type = Linear). 
This distillation formulation was presented in~\citet{squarehead} for recovering LLMs after pruning.
We also investigated taking an affine combination between the KL and intermediate losses, trying several different coefficients.
Table~\ref{tab:distill_ablate} shows our results; using just the KL divergence gives the best results.
We also tried minimizing the ground truth loss instead of a distillation loss. 
We use the same setup as Table~\ref{tab:distill_ablate}, and find that minimizing the ground truth loss achieves 52.7\% GSM8k accuracy, and 13.6 Wikitext perplexity.
Therefore we use the KL divergence.

\section{Rounding weights via Discrepancy Theory}
\label{sec:theory}
\subsection{The Lovett Meka algorithm}

A seminal result by Lovett and Meka~\cite{LovettMekaFOCS12} works as follows: we are given a point $y \in [0,1]^n$ in the hypercube, vectors $v_1,\ldots,v_m \in \setR^n$ with $\|v_j\|_2=1$ and parameters $c_j \geq 0$ so that
$\sum_{j=1}^m e^{-c_j^2/16} \leq \frac{n}{16}$. Then in randomized polynomial time one can find
a point $x \in [0,1]^n$ so that $|\left<v_j,x-y\right>| \leq c_j$ for all $j$ and at least half the coordinates of $x$
are integral. Their algorithm is simple and elegant: we construct $x$ as the outcome of a random walk starting at $y$.
Then iteratively, for some small step size $\delta>0$ we add the outcome of a random Gaussian times $\delta$ to the current point. After hitting
some constraint $x_i = 0$, $x_i=1$ or $|\left<v_j,x-y\right>| = c_j$, the Gaussian updates will be taken orthogonal to those normal vectors.
In other words, the random walk will continue in the face of the described polytope. Still \cite{LovettMekaFOCS12} prove that 
performing the updates for $O(\frac{1}{\delta^2})$ iterations the walk will cover enough distance so that on average $\Theta(n)$
box constraints must become tight.

In our setting we only need to use parameters $c_j = 0$. However we use some properties of the Lovett-Meka algorithm that
are not explicitly stated elsewhere. Here we denote $\|M\|_{\mathcal{S}(1)}$ as the sum of the singular values of a  matrix $M$ (also called \emph{Schatten-1 norm}, \emph{nuclear norm} or \emph{trace norm} of $M$).
\begin{theorem}[Derived from \cite{LovettMekaFOCS12}] \label{thm:LovettMeka}
  Let $g_1,\ldots,g_m \in \setR^n$ be any vectors with $m \leq \frac{n}{16}$ and let $y \in [0,1]^n$. Then in polynomial time one can compute a sample $x \sim \mathcal{D} := \mathcal{D}_{LM}(g_1,\ldots,g_m,y)$ so that
  \begin{enumerate}
  \item[(i)] One has $x \in [0,1]^n$ and with probability at least $\frac{1}{10}$ one has $|\{ j \in [n] : x_j \in \{ 0,1\} \}| \geq \frac{n}{2}$.
  \item[(ii)] For any vector $\theta \in \setR^n$ one has $\E_{x \sim \mathcal{D}}[\left<\theta,x-y\right>^2] \leq O(\|\theta\|_2^2)$.
  \item[(iii)] For any symmetric matrix $M \in \setR^{n \times n}$ one has $\E[\left<M,(x-y)(x-y)^T\right>] \leq O(\|M\|_{\mathcal{S}(1)})$.
  \end{enumerate}
\end{theorem}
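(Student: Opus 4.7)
The strategy is to open up the Lovett--Meka random walk restricted to the affine subspace $V = \{v \in \mathbb{R}^n : \langle g_j, v-y\rangle = 0 \text{ for all } j \in [m]\}$ (this is the case where all slack parameters are $c_j = 0$). Set $x_0 = y$ and for each $t$ let $F_t \subseteq [n]$ denote the coordinates already frozen at $\{0,1\}$, and $W_t = \Set{v \in \mathbb{R}^n : \langle g_j, v\rangle = 0 \text{ for all } j \in [m], \; v_i = 0 \text{ for all } i \in F_t}$, with $P_t$ the orthogonal projection onto $W_t$. The update rule is $x_{t+1} = x_t + \delta P_t Z_t$ with $Z_t \sim N(0,I_n)$, clamping any coordinate that would leave $[0,1]$ and adding it to $F_{t+1}$. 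Run for $T = \Theta(1/\delta^2)$ steps at a small step size $\delta$. By construction this walk stays in $V\cap [0,1]^n$ and each coordinate is (up to the clamping correction) a martingale.

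For part (i), I would follow the original analysis of \cite{LovettMekaFOCS12}. Let $\Phi_t = \|x_t - y\|_2^2$. Orthogonality of $P_t$ gives $\mathbb{E}[\Phi_{t+1}-\Phi_t \mid \mathcal{F}_t] = \delta^2 \Tr(P_t)$, and as long as $|F_t| < n/2$ one has $\Tr(P_t) = \dim(W_t) \geq n - m - |F_t| \geq n/2 - n/16 = 7n/16$. Since $\Phi_T \leq n$ holds deterministically, a stopping-time / Markov argument as in \cite{LovettMekaFOCS12} shows that for the chosen $T$ with $T\delta^2 = \Theta(1)$, the event $|F_T| \geq n/2$ occurs with probability at least $1/10$.

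Parts (ii) and (iii) are the new observations and I expect them to be much easier than (i). For (ii), telescoping gives $x - y = \sum_{t=0}^{T-1} \delta P_t Z_t$ (up to the clamping correction, which is controllable for small $\delta$), and since $P_t$ is a symmetric projection, $\delta\langle \theta, P_t Z_t\rangle = \delta\langle P_t\theta, Z_t\rangle$ is conditionally centered Gaussian with variance $\delta^2\|P_t\theta\|_2^2 \leq \delta^2\|\theta\|_2^2$. Orthogonality of martingale increments then yields
\[
\mathbb{E}[\langle \theta, x-y\rangle^2] \leq \delta^2 \sum_{t=0}^{T-1} \mathbb{E}[\|P_t\theta\|_2^2] \leq T\delta^2 \|\theta\|_2^2 = O(\|\theta\|_2^2).
\]
For (iii), spectrally decompose $M = \sum_i \lambda_i u_i u_i^T$ with $\{u_i\}$ orthonormal and $\sum_i|\lambda_i| = \|M\|_{\mathcal{S}(1)}$, so that
\[
\mathbb{E}[\langle M, (x-y)(x-y)^T\rangle] = \sum_i \lambda_i \, \mathbb{E}[\langle u_i, x-y\rangle^2] \leq \sum_i |\lambda_i| \cdot O(1) = O(\|M\|_{\mathcal{S}(1)}),
\]
invoking (ii) on each unit vector $u_i$.

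The main obstacle is (i): the stopping-time / submartingale analysis of \cite{LovettMekaFOCS12}, together with the careful handling of clamping events to ensure the walk makes honest progress, is the genuinely delicate step. Parts (ii) and (iii) -- which are not stated explicitly in \cite{LovettMekaFOCS12} but which we crucially need for the downstream generalization theorem -- follow by tracking the martingale variance of the walk and a spectral decomposition, and this is precisely why we must open the algorithm up rather than use its conclusion as a black box.
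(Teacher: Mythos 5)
Your proposal is correct and follows essentially the same route as the paper: part (i) is quoted from Lovett--Meka, part (ii) is obtained by viewing $x-y$ as a martingale of $O(1/\delta^2)$ walk increments each with conditional variance $O(\delta^2\|\theta\|_2^2)$ in the direction $\theta$, and part (iii) follows from (ii) by trace duality (your eigendecomposition of $M$ is the same argument as the paper's bound $\left<M,\E[(x-y)(x-y)^T]\right> \leq \|M\|_{\mathcal{S}(1)}\|\E[(x-y)(x-y)^T]\|_{\textrm{op}}$). Your handling of the boundary/clamping issue is at the same level of detail as the paper's, so there is no substantive gap.
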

\begin{proof}
(i) is explicitly in \cite{LovettMekaFOCS12}. For (ii) we use that the outcome of the random walk is of the form
\[
  x = y + \delta \sum_{t=1}^{O(1/\delta^2)} u_t \quad \textrm{where} \quad u_t \sim N(\bm{0},\Sigma_t)
\]
Here $0 \preceq \Sigma_t \preceq I_n$. But crucially each covariance matrix $\Sigma_t$ may depend on the outcome of $u_1,\ldots,u_{t-1}$.
In particular it is \emph{not} true that $x-y$ is Gaussian. But it is a Martingale and as for each step $t$ one has $\E[\left<u_t,\theta\right>]=0$
and $\E[\left<u_t,\theta\right>^2] \leq O(\|\theta\|_2^2)$, the variance still satisfies $\E[\big<\delta \sum_{t=1}^{O(1/\delta^2)}  u_t,\theta\big>^2] \leq O(\|\theta\|_2^2)$ which settles (ii).
Finally we argue why (iii) holds. We note that (ii) can be restated as $\E_{x \sim \mathcal{D}}[(x-y)(x-y)^T] \preceq O(1) \cdot I_n$. Then
\begin{align*}
    \E[\left<M,(x-y)(x-y)^T\right>] &= \left<M,\E[(x-y)(x-y)^T]\right>\\
    &\leq \|M\|_{\mathcal{S}(1)} \cdot \|\E[(x-y)(x-y)^T]\|_{\textrm{op}}\\
    &\leq O(\|M\|_{\mathcal{S}(1)}).
\end{align*}
\end{proof}

\subsection{The main theoretical result}

As explained earlier we assume that we are given a weight vector $y \in [0,1]^n$ and have access to samples $g_1,\ldots,g_m \sim \mathcal{D}$ 
where $\mathcal{D}$ is a distribution on $\R^n$ whose covariance matrix $\Sigma := \E_{g \sim \data}[gg^T]$ has rapidly decaying Eigenvalues, 
say $\lambda_k \leq \frac{C}{k^{\alpha}}$ for some constants $C>0$ and $\alpha>1$. 
In order to prove rigorous bounds we also need a mild assumption that provides that the distribution is well-behaved. We use the notion by O'Donnell~\cite{ODonnellBook2014} and say that for a parameter $\beta \geq 1$, a random vector $X \in \setR^n$ is \emph{$\beta$-reasonable} if
\[
 \E[\left<X,\theta\right>^4] \leq \beta \cdot \E[\left<X,\theta\right>^2]^2 \quad \forall \theta \in \setR^n
\]
For example $X \sim \{ -1,1\}^n$ and a Gaussian $X \sim N(\bm{0},\Sigma)$ are both $O(1)$-reasonable. Our main theoretical result is then:
\begin{theorem} \label{thm:MainTheorem}
  Let $\alpha > 1$ and $\beta \geq 1$ be constants and let $1 \leq m \leq \frac{n}{16}$. Let $\mathcal{D}$ be a $\beta$-reasonable distribution with unknown covariance matrix $\Sigma \in \mathbb{R}^{n \times n}$ whose Eigenvalues satisfy $\lambda_k \leq \frac{1}{k^{\alpha}}$ for all $k=1,\ldots,n$.
  Then there is a randomized polynomial time algorithm that given 
  a $y \in [0,1]^n$ and $m$ independent samples $g_1,\ldots,g_m \sim \mathcal{D}$, produces an $x \in [0,1]^n$
  so that with probability at least 0.99 one has
  \begin{enumerate}
  \item[(i)] $|\textrm{frac}(x)| \leq 16m$
  \item[(ii)] $\left<\Sigma,(x-y)(x-y)^T\right> \lesssim_{\alpha} \log(\frac{n}{m}) \cdot F_{\alpha}(m,n)$ where 
  $$
  F_{\alpha}(m,n) := \begin{cases} m^{1-\alpha} & \textrm{if }1<\alpha<\frac{3}{2} \\ \frac{\log(n)}{\sqrt{m}} & \textrm{if }\alpha = \frac{3}{2} \\ \frac{1}{\sqrt{m}} & \textrm{if }\alpha>3/2. \end{cases}
  $$
  \end{enumerate}
\end{theorem}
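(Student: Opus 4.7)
\textbf{Proof plan for Theorem \ref{thm:MainTheorem}.}
The plan is to iterate the Lovett--Meka sampler of Theorem \ref{thm:LovettMeka} for $T = O(\log(n/m))$ rounds, each round halving the set of fractional coordinates using $m$ fresh samples from $\mathcal{D}$. Initialize $x^{(0)} = y$ and $F_0 = [n]$. At iteration $t$, draw fresh $g_1^{(t)}, \ldots, g_m^{(t)} \sim \mathcal{D}$, restrict both the current point and the samples to the coordinates in $F_t$, and invoke $\mathcal{D}_{LM}$ on this $|F_t|$-dimensional instance (with $c_j = 0$) to produce $x^{(t)}$. Property (i) of Theorem \ref{thm:LovettMeka} guarantees that $|F_{t+1}| \le |F_t|/2$ with constant probability; amplifying by $O(\log T)$ independent repetitions per round boosts this to probability $\ge 0.995$ across all rounds. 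Since $|F_t|$ halves, after $T = O(\log(n/m))$ rounds we reach $|F_T| \le 16m$, proving conclusion (i).

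For conclusion (ii), write $x - y = \sum_{t=1}^T \Delta x^{(t)}$ with $\Delta x^{(t)} = x^{(t)} - x^{(t-1)}$. Because the Lovett--Meka walk is a Martingale, $\E[\Delta x^{(t)} \mid \textrm{history}] = \mathbf{0}$, so all cross-terms in the quadratic form vanish in expectation and
\begin{equation*}
\E\bigl[(x-y)^T \Sigma (x-y)\bigr] \;=\; \sum_{t=1}^{T} \E\bigl[(\Delta x^{(t)})^T \Sigma \, \Delta x^{(t)}\bigr].
\end{equation*}
Each $\Delta x^{(t)}$ lies in the kernel of the sample matrix, so $\Delta x^{(t)} = P^{(t)} \Delta x^{(t)}$ for the orthogonal projection $P^{(t)}$ onto that kernel. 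Applying property (iii) of Theorem \ref{thm:LovettMeka} with $M = P^{(t)} \Sigma P^{(t)}$, a PSD matrix with $\|M\|_{\mathcal{S}(1)} = \mathrm{tr}(\Sigma P^{(t)})$, yields
\begin{equation*}
\E\bigl[(\Delta x^{(t)})^T \Sigma \, \Delta x^{(t)} \,\bigm|\, g_1^{(t)}, \ldots, g_m^{(t)} \bigr] \;\lesssim\; \mathrm{tr}(\Sigma P^{(t)}).
\end{equation*}
Taking expectation over the samples and summing over $t$ reduces the theorem to the single-round ``random sketching'' bound $\E_{g_1,\ldots,g_m \sim \mathcal{D}}[\mathrm{tr}(\Sigma P)] \lesssim_{\alpha,\beta} F_\alpha(m,n)$.

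To establish this bound I decompose $\Sigma = \sum_k \lambda_k v_k v_k^T$ and split at the top $m$ eigendirections,
\begin{equation*}
\mathrm{tr}(\Sigma P) \;=\; \sum_{k \le m} \lambda_k \|P v_k\|^2 \;+\; \sum_{k > m} \lambda_k \|P v_k\|^2.
\end{equation*}
The tail is bounded deterministically by $\sum_{k > m} \lambda_k = O(m^{1-\alpha})$ from the eigenvalue decay. For the head, I would use the oblique-projection identity $I - P = G(G^T G)^{+} G^T$ with $G = [g_1\,|\,\cdots\,|\,g_m]$ and estimate $v_k^T P v_k = 1 - v_k^T (I-P) v_k$ through a second/fourth-moment concentration argument on the random matrices $G^T G$ and $G^T \Sigma G$. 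The $\beta$-reasonable hypothesis controls the required fourth moments of the relevant quadratic forms in the $g_j$, producing a $O(1/\sqrt{m})$-type head bound (with a $\log n$ factor at the critical exponent $\alpha = 3/2$). Combining head and tail yields $F_\alpha(m,n)$, and multiplying by $T = O(\log(n/m))$ followed by Markov's inequality gives (ii) with probability $0.99$.

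The main obstacle is the head estimate: demonstrating that $\sum_{k \le m} \lambda_k \E[\|P v_k\|^2]$ is genuinely $O(1/\sqrt{m})$ rather than merely the tail-like $O(m^{1-\alpha})$. For $\alpha$ close to $1$ the top eigenvalues are of comparable magnitude, so the naive intuition ``each sample $g_j$ captures one top direction'' is too coarse and one needs quantitative random-matrix control of the residual projection onto $\mathrm{span}(g_1,\ldots,g_m)$. The three-way case split in $F_\alpha(m,n)$ is exactly the crossover between ``spectrum too flat to be sketched below the tail'' ($1<\alpha<3/2$) and ``spectrum sharp enough for the standard Monte-Carlo $1/\sqrt{m}$ rate to take over'' ($\alpha>3/2$), with $\alpha = 3/2$ giving the logarithmic boundary.
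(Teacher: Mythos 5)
Your proposal follows the same macro-structure as the paper's argument: iterate the Lovett--Meka sampler of Theorem~\ref{thm:LovettMeka}, use the martingale property of the walk to kill the cross-terms so that the total quadratic form telescopes into a sum of per-round contributions, and reduce each per-round term via Theorem~\ref{thm:LovettMeka}(iii) to a single concentration quantity involving only the sample gradients. Your intermediate quantity is in fact clean and slightly tighter than the paper's: because the increment lies in $\ker(G)$ with $G$ the matrix whose rows are the $g_j^T$, one has $XP = 0$ for $X := \frac{1}{m}\sum_j g_j g_j^T$, hence $\mathrm{tr}(\Sigma P) = \mathrm{tr}((\Sigma - X)P) \leq \|\Sigma - X\|_{\mathcal{S}(1)}$, and the paper's Proposition~\ref{prop:Schatten1MatrixConcentration} bounds the right-hand side by $F_\alpha(m,n)$ in expectation. (Your choices to redraw fresh samples each round, to restrict to the fractional coordinate set, and to amplify each round rather than repeat until good are inessential variations.)

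The genuine gap is that the single-round bound $\E[\mathrm{tr}(\Sigma P)] \lesssim_{\alpha,\beta} F_\alpha(m,n)$ is the crux of the theorem, and you have not proved it. Your plan is to eigendecompose $\Sigma$, split at index $m$, bound the tail by $\sum_{k>m}\lambda_k \lesssim m^{1-\alpha}$ (fine), and argue a $O(1/\sqrt{m})$-type head bound on $\sum_{k\leq m}\lambda_k\,\E[\|Pv_k\|^2]$ by spectral control of the random projection $P = I - G(G^TG)^{+}G^T$. Note that the crude bound $\E[\|Pv_k\|^2] \leq 1$ only gives $\sum_{k\leq m}\lambda_k = O(1)$, so you truly need $\E[\|Pv_k\|^2]$ to be small for small $k$, i.e.\ quantitative control of how well the span of $m$ random samples captures the top eigendirections. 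This is delicate precisely when the leading eigenvalues are close in magnitude (small $\alpha$), and you explicitly flag it as ``the main obstacle.'' The paper sidesteps this entirely: Proposition~\ref{prop:Schatten1MatrixConcentration} bounds $\|\Sigma - X\|_{\mathcal{S}(1)}$ directly in the eigenbasis of $\Sigma$ by a \emph{dyadic} block decomposition of the index range, using only (a) an entrywise variance bound $\E[|X_{ij}-\Sigma_{ij}|^2] \lesssim_\beta \lambda_i\lambda_j/m$ coming from the $\beta$-reasonable hypothesis, (b) the inequality $\|A\|_{\mathcal{S}(1)}\leq\sqrt{\mathrm{rank}(A)}\,\|A\|_F$ applied per block, and (c) the observation that the off-diagonal blocks of $X$ have rank at most $\min\{m,2^\ell\}$. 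No spectral-perturbation control of $P$ is required, and the phase transition at $\alpha = 3/2$ falls out of summing the dyadic contributions. So while your reduction is correct and arguably cleaner in form, the concentration lemma that carries all the weight is missing, and the random-matrix route you sketch for it is substantially harder than what the paper actually does.
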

Ignoring polylogarithmic factors, this means that we can find an $x$ with $O(m)$ fractional coordinates left and
$\left<\Sigma,(x-y)(x-y)^T\right> \leq \max\{ m^{1-\alpha}, \frac{1}{\sqrt{m}} \}$.
The algorithm to compute $x$ as in Theorem~\ref{thm:MainTheorem} is simple: 
\begin{center}
\label{alg:RoundingAlg_LM}
\framebox{\begin{minipage}{12cm}
{\sc Lovett-Meka Rounding Algorithm } \vspace{2mm} \hrule \vspace{2mm}
{\bf Input:} Weight vector $y \in [0,1]^n$ and parameter $m$ \\
{\bf Output:} Rounded vector $x$
\begin{enumerate}
\item[(1)] Sample $g_1,\ldots,g_m \sim \mathcal{D}$. Initialize $x^{(0)} := y$
\item[(2)] FOR $t=1$ TO $\infty$ DO %
   \begin{enumerate}
   \item[(3)] IF $|\textrm{frac}(x^{(t-1)})| \leq 16m$ then return $x^{(t-1)}$
   \item[(4)] Set $x^{(t)} :=\mathcal{D}_{LM}(g_1,\ldots,g_m,x^{(t-1)})$
   \end{enumerate}
\end{enumerate}
\end{minipage}}
\end{center}
A crucial aspect of analyzing this algorithm is understanding how far the \emph{covariance estimator}
$\frac{1}{m} \sum_{j=1}^m g_jg_j^T$ is from the actual covariance matrix $\Sigma$ in terms of the \textit{Schatten 1-norm} $\| \cdot \|_{\mathcal{S}(1)}$.
We use the following result.
\begin{proposition} \label{prop:Schatten1MatrixConcentration}
  Let $\alpha > 1$, $\beta \geq 1$ and let $\mathcal{D}$ be a $\beta$-reasonable distribution with covariance matrix  $\Sigma \in \setR^{n \times n}$ whose Eigenvalues satisfy $\lambda_k \leq \frac{1}{k^{\alpha}}$ for all $k=1,\ldots,n$.
Let $g_1,\ldots,g_m \sim \mathcal{D}$ be independent samples and let $X^{(\ell)} := \frac{1}{m} g_{\ell}g_{\ell}^T$ and $X := \sum_{\ell=1}^m X^{(\ell)}$. Then $$\E[\|X-\Sigma\|_{\mathcal{S}(1)}] \lesssim_{\alpha,\beta} F_{\alpha}(m,n)$$
where $F_{\alpha}(m,n)$ is as defined in Theorem~\ref{thm:MainTheorem}.
\end{proposition}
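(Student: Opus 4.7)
My plan is to bound $\E\|X-\Sigma\|_{\mathcal{S}(1)}$ via a spectral head/tail decomposition of $\Sigma$. Let $P$ denote the orthogonal projection onto the top-$r$ eigenspace of $\Sigma$, with the rank $r$ to be optimized at the end in terms of $\alpha$ (and possibly $n,m$). By symmetry of both $X$ and $\Sigma$, one can write
\[
X-\Sigma \;=\; P(X-\Sigma)P \;+\; 2\,P(X-\Sigma)P^\perp \;+\; P^\perp(X-\Sigma)P^\perp,
\]
and the triangle inequality for the Schatten-$1$ norm reduces the problem to bounding a head, a cross, and a tail block separately.

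For the head, the matrix $P(X-\Sigma)P$ has rank at most $r$, so the norm inequality $\|A\|_{\mathcal{S}(1)}\leq \sqrt{r}\,\|A\|_F$ applies, and the Frobenius norm is controlled by a direct second-moment calculation $\E\|X-\Sigma\|_F^{2}= \tfrac{1}{m}(\E\|g\|^{4}-\|\Sigma\|_F^{2})$. The $\beta$-reasonable hypothesis, applied coordinate-wise in the eigenbasis of $\Sigma$, together with Cauchy--Schwarz gives $\E[g_i^{2}g_j^{2}]\leq \beta \lambda_i \lambda_j$, and hence $\E\|g\|^{4}\leq \beta(\Tr \Sigma)^{2}=O_{\alpha,\beta}(1)$ since $\alpha>1$, yielding an expected head contribution of order $\sqrt{r/m}$. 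For the tail, I use that $P^\perp X P^\perp$ and $P^\perp \Sigma P^\perp$ are both PSD, so their Schatten-$1$ norms equal their traces; the triangle inequality then gives $\|P^\perp(X-\Sigma)P^\perp\|_{\mathcal{S}(1)} \leq \Tr(P^\perp X P^\perp)+\sum_{k>r}\lambda_k$, which in expectation is $2\sum_{k>r}\lambda_k \lesssim_\alpha r^{1-\alpha}$. The cross block has rank $\leq r$ and is treated via $\sqrt{r}$ times its Frobenius norm, using the same variance computation restricted to the mixed head/tail entries.

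Optimizing $r$ is then expected to reproduce the piecewise rate $F_\alpha(m,n)$: for $1<\alpha<3/2$ the tail dominates and one gets $m^{1-\alpha}$; for $\alpha>3/2$ the head is binding and should give $1/\sqrt{m}$; and the critical exponent $\alpha=3/2$ is where the two contributions equilibrate, producing the $\log n$ factor. The main obstacle I anticipate is the fast-decay regime $\alpha>3/2$, since the crude bound $\sqrt{r/m}+r^{1-\alpha}$ optimizes to $m^{(1-\alpha)/(2\alpha-1)}$, which is strictly weaker than $1/\sqrt{m}$. Closing this gap will likely require replacing the $\sqrt{r}\,\|\cdot\|_F$ conversion for the head by a sharper matrix concentration inequality---for instance a Koltchinskii--Lounici-style operator-norm bound adapted to the fourth-moment hypothesis, or a noncommutative Bernstein/Khintchine argument in Schatten-$1$---that exploits the $O(1)$ effective rank of $\Sigma$ implied by the decay $\lambda_k\leq k^{-\alpha}$ with $\alpha>1$. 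The $\log n$ factor at $\alpha=3/2$ should then emerge naturally from whichever matrix-concentration tool supplies this refined head bound.
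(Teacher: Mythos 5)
Your single-cutoff head/tail decomposition does not recover the claimed rate, and---contrary to your own assessment---the difficulty is not confined to $\alpha>3/2$. With head $\lesssim\sqrt{r/m}$, tail $\lesssim r^{1-\alpha}$, and cross $\lesssim r^{(2-\alpha)/2}/\sqrt{m}$ (which is dominated by the head once $\alpha>1$), optimizing over $r$ yields precisely your ``crude bound'' $m^{(1-\alpha)/(2\alpha-1)}$. But $(1-\alpha)/(2\alpha-1)>1-\alpha$ for \emph{every} $\alpha>1$, so this is strictly weaker than $F_\alpha(m,n)$ throughout the range; in particular your claim that ``for $1<\alpha<3/2$ the tail dominates and one gets $m^{1-\alpha}$'' is not supported by the bound you actually derive. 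The repair you gesture at---a Koltchinskii--Lounici or noncommutative Bernstein type operator-norm bound for the head---is not made precise, and it also does not address the real source of slack: the cross block $P(X-\Sigma)P^{\perp}$ couples a rank-$r$ subspace with the entire tail, and the $\sqrt{r}\,\|\cdot\|_F$ conversion is inherently lossy there.

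The paper's proof avoids this by replacing the single cutoff with a dyadic multi-scale partition $J_\ell=\{i:2^{\ell-1}\le i<2^{\ell}\}$ and bounding $\E\|X-\Sigma\|_{\mathcal{S}(1)}\le 2\sum_{\ell\le k}\E\|X_{J_\ell,J_k}-\Sigma_{J_\ell,J_k}\|_{\mathcal{S}(1)}$. The key point your scheme cannot exploit is that the off-diagonal block $X_{J_\ell,J_k}$ with $\ell<k$ has rank at most $\min\{2^{\ell},m\}$, governed by the \emph{smaller} block, so the prefactor in the rank-times-Frobenius conversion shrinks in step with the block of large eigenvalues rather than staying pinned at $\sqrt{r}$. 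Your entrywise variance estimate $\E|X_{ij}-\Sigma_{ij}|^2\lesssim_\beta\lambda_i\lambda_j/m$ is correct and is Claim I of the paper, and your PSD/trace argument for the far tail is also used (for diagonal blocks with $2^{\ell}>m$); but these ingredients produce $m^{1-\alpha}$, $(\log n)/\sqrt{m}$, or $1/\sqrt{m}$ only once they are deployed across $O(\log n)$ scales so that the resulting sums are geometric. No single choice of $r$ in your two-piece decomposition gets there.
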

We postpone the proof of Prop~\ref{prop:Schatten1MatrixConcentration} to Section~\ref{sec:Schatten1Bound}
and first conclude the proof of Theorem~\ref{thm:MainTheorem}.
\begin{proof}[Proof of Theorem~\ref{thm:MainTheorem}]
Suppose $x^{(t^*)}$ is the vector that the algorithm returned in (3). It will be notationally convenient to define $x^{(t)} := x^{(t^*)}$
for all $t > t^*$. We say that iteration $t$ is \emph{good} if either $|\textrm{frac}(x^{(t-1)})| \leq 16m$ or if
$|\textrm{frac}(x^{(t)})| \leq \frac{1}{2}|\textrm{frac}(x^{(t-1)})|$. If an iteration $t$ is not good, we repeat the iteration until it is good. From Theorem~\ref{thm:LovettMeka}.(i) we know that every iteration is good with probability at least $\frac{1}{10}$ (independently of previous outcomes), thus by standard Chernov bounds, with probability at least 0.99, within the first $T := C'\log(\frac{n}{m})$ iterations there must be at least $\log(\frac{n}{m})$ many good iterations, for $C'>0$ a sufficiently large constant. After $\log(\frac{n}{m})$ good iterations, one has
$|\textrm{frac}(x^{(T)})| \leq 16m$, and moreover the suffered discrepancy is
\[
 \E\big[\big<\Sigma,(x^{(T)}-y)(x^{(T)}-y)^T\big>\big] \leq \sum_{t=1}^T \E\big[\big<\Sigma,(x^{(t)}-x^{(t-1)})(x^{(t)}-x^{(t-1)})^T\big>\big] \lesssim_{\alpha,\beta} T \cdot F_{\alpha}(m,n).
\]
Thus the claim then follows.
\end{proof}

\subsection{Analyzing the covariance estimator\label{sec:Schatten1Bound}}
It remains to prove Prop~\ref{prop:Schatten1MatrixConcentration}.
\begin{proof}[Proof of Prop~\ref{prop:Schatten1MatrixConcentration}]
 We first present the proof for the case of $1<\alpha<\frac{3}{2}$ and then discuss the modifications for the other two cases.
  The claim is invariant under a change of basis, hence we may assume that $\Sigma$ is a diagonal matrix with Eigenvalues $\lambda_1 \geq \ldots \geq \lambda_n \geq 0$, i.e. $\Sigma_{ii}= \lambda_i$ for all $i \in [n]$. We can bound the variance terms for all entries (whether diagonal or not): \\
  {\bf Claim I.} \emph{For all $i,j \in [n]$ one has $\E[|X_{ij} - \Sigma_{ij}|^2] \lesssim_{\beta} \frac{\lambda_i\lambda_j}{m}$.} \\
  {\bf Proof of Claim I.} We recall that $\E[X] = \Sigma$ and $\E[X^{(\ell)}] = \frac{1}{m}\Sigma$. For all $i,j \in [n]$ one has 
  \begin{eqnarray*}  %
  \E[|X_{ij} - \Sigma_{ij}|^2] &=& \textrm{Var}[X_{ij}] \\ &=& \sum_{\ell=1}^m \textrm{Var}[X_{ij}^{(\ell)}] \\ &=&\frac{1}{m} \E_{h \sim \mathcal{D}}[|h_i h_j - \Sigma_{ij}|^2] \\
  &\leq& \frac{2}{m} \Big( \E_{h \sim \mathcal{D}}[h_i^2 h_j^2] + \underbrace{\Sigma_{ij}^2}_{\leq \lambda_i \lambda_j} \Big) \\
  &\stackrel{(*)}{\leq}& \frac{2}{m}(\E_{h \sim \mathcal{D}}[h_i^4]^{1/2}\E_{h \sim \mathcal{D}}[h_j^4]^{1/2} + \lambda_i \lambda_j) \\
  &\stackrel{(**)}{\leq}& \frac{2}{m} \big(\beta^{1/2}\underbrace{\E_{h \sim \mathcal{D}}[h_i^2]}_{=\lambda_i} \cdot \beta^{1/2}\underbrace{\E_{h \sim \mathcal{D}}[h_j^2]}_{=\lambda_j} + \lambda_i \lambda_j\big) = \frac{2\beta+2}{m} \cdot \lambda_i \lambda_j
  \end{eqnarray*}
 Here we use the inequality $(a-b)^2 \leq 2a^2+2b^2$. Moveover $\Sigma_{ij} \leq \lambda_i \lambda_j$ holds because $\Sigma$ is a diagonal matrix. Note that we have used Cauchy-Schwarz in $(*)$ and the assumption that $\mathcal{D}$ is $\beta$-reasonable in $(**)$.   \qed \\
  Now let $J_{\ell} := \{ i \in [n] \mid 2^{\ell-1} \leq i < 2^{\ell} \}$. It will be useful to note that $|J_{\ell}| \leq 2^{\ell}$ and the sum of the Eigenvalues
  in each block satisfies $\sum_{i \in J_{\ell}} \lambda_i \lesssim 2^{\ell} \cdot (2^{\ell})^{-\alpha} =  (2^{\ell})^{1-\alpha}$.
  Our strategy is to use the triangle inequality to bound: %
\begin{equation} \label{eq:SplitTraceNormIntoBlocks}
  \E[\|X-\Sigma\|_{\mathcal{S}(1)}] \leq 2\sum_{\ell \geq 1} \sum_{k \geq \ell} \E[\|X_{J_{\ell},J_{k}} - \Sigma_{J_{\ell},J_{k}}\|_{\mathcal{S}(1)}]
\end{equation}
Here $X_{J_{\ell},J_{k}}$ is the $|J_{\ell}| \times |J_k|$ submatrix of $X$ that is indexed by rows $J_{\ell}$ and columns $J_k$.
In the following we will estimate the contribution of the different blocks depending on their parameter regime and
whether they are diagonal or off-diagonal. 

\noindent {\bf Claim II.} \emph{Let  $\ell \leq k$ and abbreviate $Y := X_{J_{\ell},J_{k}} - \Sigma_{J_{\ell},J_{k}}$. Then}
  \[
    \E[\|Y\|_{\mathcal{S}(1)}] \lesssim  \sqrt{\frac{r}{m}} \cdot 2^{\frac{\ell+k}{2}(1-\alpha)} 
  \]
  \emph{assuming that $\textrm{rank}(Y) \leq r$ for any outcome of $Y$.} \\
  {\bf Proof of Claim II.} We recall that for any matrix $A$ one has $\|A\|_{\mathcal{S}(1)} \leq \sqrt{\textrm{rank}(A)} \cdot \|A\|_F$.
  Then for all $\ell \leq k$ we can bound
  \begin{eqnarray*}
    \E[\|Y\|_{\mathcal{S}(1)}] &\leq& \sqrt{r} \cdot \E[\|Y\|_F] \\ &\stackrel{\textrm{Jensen}}{\leq}& \sqrt{r} \cdot \E[\|Y\|_F^2]^{1/2} \\
    &\stackrel{\textrm{Claim I}}{\lesssim_{\beta}}& \sqrt{r} \cdot \Big(\frac{1}{m} \Big(\sum_{i \in J_{\ell}} \lambda_i\Big)\Big(\sum_{j \in J_k} \lambda_j\Big) \Big)^{1/2} \\
    &\lesssim& \sqrt{r} \cdot \sqrt{ \frac{1}{m} \cdot (2^{\ell})^{1-\alpha} \cdot  (2^k)^{1-\alpha}} \\
    &=& \sqrt{\frac{r}{m}} \cdot 2^{\frac{\ell+k}{2}(1-\alpha)}  \qed
  \end{eqnarray*}
  Now we can bound the contribution that off-diagonal blocks have to Eq~\eqref{eq:SplitTraceNormIntoBlocks}.
  Here we use that $\Sigma_{J_{\ell},J_{k}} = \bm{0}$ and $\textrm{rank}(X_{J_{\ell},J_{k}}) \leq \min\{ m,2^{\ell}\}$.
  Then
  \begin{eqnarray}
    \sum_{\ell \geq 1} \sum_{k>\ell} \E\big[\|X_{J_{\ell},J_{k}} - \underbrace{\Sigma_{J_{\ell},J_{k}}}_{=\bm{0}}\|_{\mathcal{S}(1)} \big] \nonumber
    &\stackrel{\textrm{Claim II}}{\leq}&  \sum_{\ell \geq 1} \sum_{k>\ell} \frac{\sqrt{\min\{ m,2^{\ell} \}}}{\sqrt{m}} \cdot 2^{\frac{\ell+k}{2}(1-\alpha)} \nonumber \\
    &=&  \sum_{\ell \geq 1} \min\big\{ 1, \sqrt{2^{\ell}/m} \big\} \cdot 2^{\frac{\ell}{2}(1-\alpha)} \underbrace{\sum_{k>\ell}  \cdot 2^{\frac{k}{2}(1-\alpha)}}_{\lesssim_{\alpha} 2^{\ell(1-\alpha) / 2}} \nonumber \\
    &\lesssim_{\alpha}& \sum_{\ell \geq 1} \min\big\{ 1, \sqrt{2^{\ell}/m}\big\} \cdot (2^{\ell})^{1-\alpha} \label{eq:OffdiagTerms} \\
    &\lesssim_{\alpha} &  m^{1-\alpha} \nonumber
  \end{eqnarray}
In the last step we use that the function $z \mapsto \sqrt{z} \cdot z^{1-\alpha}$ is monotonically increasing  while $z \mapsto z^{1-\alpha}$ is monotonically decreasing as we assume that $1<\alpha<\frac{3}{2}$.
Hence the term with $m = 2^{\ell}$ dominates the sum.

It remains to bound the diagonal blocks. First we consider the regime of small indices. Here
we use the bound $\textrm{rank}(X_{J_{\ell},J_{\ell}} - \Sigma_{J_{\ell},J_{\ell}}) \leq |J_{\ell}| \leq 2^{\ell}$ which gives
  \begin{equation} \label{eq:DiagonalBlocksLowIndices}
  \sum_{\ell: 2^{\ell} \leq m} \E[\|X_{J_{\ell},J_{\ell}} - \Sigma_{J_{\ell},J_{\ell}}\|_{\mathcal{S}(1)}] \stackrel{\textrm{Claim II}}{\leq}  \sum_{\ell: 2^{\ell} \leq m}\sqrt{\frac{2^{\ell}}{m}} \cdot 2^{\ell(1-\alpha)} \lesssim  m^{1-\alpha}
\end{equation}
Here the last summand (with $2^{\ell}=m$) dominates the sum in \eqref{eq:DiagonalBlocksLowIndices}, again
as $z \mapsto \sqrt{z} \cdot z^{1-\alpha}$ is monotonically increasing.

The final regime to consider is the one of large indices, i.e. diagonal blocks with $2^{\ell}>m$.
In that case we can ignore any concentration that the randomness may provide and simply bound
\begin{eqnarray}
  \sum_{\ell: 2^{\ell} > m} \E[\|X_{J_{\ell},J_{\ell}} - \Sigma_{J_{\ell},J_{\ell}}\|_{\mathcal{S}(1)}] &\leq&
     \sum_{\ell: 2^{\ell}>m}  \big( \E[\|X_{J_{\ell},J_{\ell}}\|_{\mathcal{S}(1)}] + \|\Sigma_{J_{\ell},J_{\ell}}\|_{\mathcal{S}(1)} \big) \nonumber \\
      &=& \sum_{\ell: 2^{\ell}>m} \big( \E[\textrm{Tr}[X_{J_{\ell},J_{\ell}}]] + \textrm{Tr}[\Sigma_{J_{\ell},J_{\ell}}] \big) \nonumber \\
          &=& \sum_{j=m}^n (\underbrace{\E[X_{jj}]}_{=\Sigma_{jj}} + \underbrace{\Sigma_{jj}}_{\leq  j^{-\alpha}}) \nonumber \\
  &\lesssim& \sum_{j\geq m} \frac{1}{j^{\alpha}} \lesssim  m^{1-\alpha} \label{eq:DiagonalBlocksHighIndices}
\end{eqnarray}
Here we use again the triangle inequality of the trace norm and the fact that the matrices $X_{J_{\ell},J_{\ell}}$
and $\Sigma_{J_{\ell},J_{\ell}}$ are always positive semidefinite. 
This concludes the argument for $1<\alpha<\frac{3}{2}$. If $\alpha = \frac{3}{2}$ then $\sqrt{2^{\ell}/m} \cdot (2^{\ell})^{1-\alpha} \leq \frac{1}{\sqrt{m}}$ for each $\ell \geq 1$ and so \eqref{eq:OffdiagTerms} is bounded by $\frac{\log(n)}{\sqrt{m}}$. Moreover, the last two cases can be merged as 
  \begin{equation} \label{eq:DiagonalBlocksAllIndices}
  \sum_{\ell \geq 1} \E[\|X_{J_{\ell},J_{\ell}} - \Sigma_{J_{\ell},J_{\ell}}\|_{\mathcal{S}(1)}] \stackrel{\textrm{Claim II}}{\leq}  \sum_{\ell \geq 1}\sqrt{\frac{2^{\ell}}{m}} \cdot 2^{\ell(1-\alpha)} \lesssim \frac{\log(n)}{\sqrt{m}}
\end{equation}
Finally, if $\alpha>\frac{3}{2}$ then the first term (for $\ell=1$) dominates the sums in \eqref{eq:OffdiagTerms} and 
\eqref{eq:DiagonalBlocksAllIndices} and the extra $\log(n)$ term can be omitted.
 \end{proof}

\section{Non-uniform Quantization Grid}
\label{sec:nonuniformgrid}
We will introduce new parameters $x\in [0,1]^n$ and define $$w^x=\wdown \odot (1-x) + \wup \odot x$$ where $\odot$ is component-wise product. Note that $w^x_i$ interpolates between $\wdown_i$ and $\wup_i$ where $w_i=\wdown_i$ if $x_i=0$ and $w_i=\wup_i$ if $x_i=1$. Let $y\in [0,1]^n$ be the interpolation point corresponding to the original weights, i.e., $w^y=w$. We can rewrite the linear constraints in terms of $x$ as follows:
\begin{align*}
    \inpro{\grad_w f(w;s_i)}{w^x-w} &= \inpro{\grad_w f(w;s_i)}{w^x-w^y}\\
    &=\inpro{\grad_w f(w;s_i)}{(\wup-\wdown)\odot (x-y)}\\
    &=\inpro{\grad_w f(w;s_i) \odot (\wup - \wdown)}{x-y}.
\end{align*}
Let $M$ be an $m\times n$ matrix whose $i^{th}$ row is given by $\grad_w f(w;s_i) \odot (\wup - \wdown)$. Then the linear constraints can be simply written as $M(x-y)=0$. 

\section{Taylor Series for KL Divergence}
\label{sec:taylor_kl}
 Let $p_w(\cdot|z_{<i})$ be the distribution of the next token predicted by the original model given prefix $z_{<i}$ where $z\sim \data$ is a sample from the data distribution. Let $$\err(\hw)=\E_{z\sim \data}\E_i\KL{p_w(\cdot |z_{<i})}{p_{\hw}(\cdot |z_{<i})}$$ be the KL divergence between the original model and quantized model.
\begin{lemma} 
\label{lem:taylor_kl}
Let
\begin{align*}
  \err(\hw) = \inpro{g_w}{\hw-w} + (\hw-w)^T H_w (\hw-w)+\cdots
\end{align*} be the Taylor series expansion of the KL divergence
where $g_w$ is the gradient and $H_w$ is the Hessian. Then
    \begin{enumerate}
        \item $g_w=0$,
        \item $H_w=\E_{z\sim \data}\E_i\E_{t\sim p_w(\cdot|z_{<i})}[(\grad_w \log p_w(t|z_{<i}))(\grad_w \log p_w(t|z_{<i}))^T]$
    \end{enumerate}
Therefore $error(\hw)\approx \E_{z\sim \data}\E_i\E_{t\sim p_w(\cdot|z_{<i})}[\inpro{\grad_w p_w(t|z_{<i})}{\hw-w}^2].$
\end{lemma}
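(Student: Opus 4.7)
The plan is to establish both claims for a single KL term $K(\hw) := \KL{p_w(\cdot|z_{<i})}{p_{\hw}(\cdot|z_{<i})}$ with $z$ and $i$ fixed, and then take expectations at the end using linearity (the Hessian and gradient of $\err$ are just the expected Hessian and gradient over $z,i$). Writing the KL as
\begin{equation*}
K(\hw) \;=\; \sum_t p_w(t|z_{<i})\log p_w(t|z_{<i}) \;-\; \sum_t p_w(t|z_{<i}) \log p_{\hw}(t|z_{<i}),
\end{equation*}
the first term is constant in $\hw$, so all derivatives with respect to $\hw$ come from the second term only. This reduces everything to computing derivatives of $\log p_{\hw}(t|z_{<i})$ with respect to $\hw$ and averaging against the fixed distribution $p_w(\cdot|z_{<i})$.

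For the gradient, I would differentiate once in $\hw$ to get $\grad_{\hw} K(\hw) = -\sum_t p_w(t|z_{<i})\,\grad_{\hw}\log p_{\hw}(t|z_{<i})$ and then evaluate at $\hw=w$. Using $\grad \log p = \grad p / p$, the evaluation becomes $-\sum_t \grad_w p_w(t|z_{<i}) = -\grad_w \sum_t p_w(t|z_{<i}) = -\grad_w 1 = 0$, which proves (1). For the Hessian, I would differentiate a second time to obtain $\grad_{\hw}^2 K(\hw) = -\sum_t p_w(t|z_{<i})\,\grad_{\hw}^2 \log p_{\hw}(t|z_{<i})$, and then invoke the elementary identity
\begin{equation*}
\grad^2 \log p \;=\; \frac{\grad^2 p}{p} \;-\; (\grad \log p)(\grad \log p)^T.
\end{equation*}
Substituting and evaluating at $\hw=w$ splits the Hessian into two pieces: the first piece is $-\sum_t \grad_w^2 p_w(t|z_{<i}) = -\grad_w^2 \sum_t p_w(t|z_{<i}) = 0$ (normalization again), while the second piece is exactly $\E_{t\sim p_w(\cdot|z_{<i})}[(\grad_w\log p_w(t|z_{<i}))(\grad_w\log p_w(t|z_{<i}))^T]$, the conditional Fisher information. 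Taking the outer expectation $\E_{z\sim\data}\E_i$ yields the formula for $H_w$ stated in part (2). The ``therefore'' statement then follows immediately from the general fact that if the gradient vanishes and the Hessian is $H_w = \E[XX^T]$ for a random vector $X$, then $(\hw-w)^T H_w (\hw-w) = \E[\inpro{X}{\hw-w}^2]$.

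The computation itself is classical (this is the standard statement that the Hessian of KL at the truth equals Fisher information), so there is no real obstacle at the level of ideas. The only care needed is to justify interchanging differentiation with the sum over $t$ and with the outer expectation over $(z,i)$; for next-token distributions over a finite vocabulary the sum is finite, and under mild integrability of the log-likelihood gradients the swap with $\E_{z\sim\data}\E_i$ is routine by dominated convergence, so I would note this briefly rather than belabor it.
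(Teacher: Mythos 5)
Your proof is correct and uses essentially the same argument as the paper: both reduce to the normalization identities $\sum_t \grad_w p_w(t)=0$ and $\sum_t\grad_w^2 p_w(t)=0$, together with the identity $\grad^2\log p = \grad^2 p/p - (\grad\log p)(\grad\log p)^T$. The only cosmetic difference is that the paper Taylor-expands $\log p_{\hw}(t)-\log p_w(t)$ inside the expectation, whereas you split off the constant entropy term and differentiate $K(\hw)$ directly before evaluating at $\hw=w$; these are equivalent by uniqueness of Taylor coefficients.
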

\begin{proof}
To simplify notation, we will ignore the $z,i$ variables coming from $\E_{z\sim\data}$ and $\E_i$ and also drop them from $p_w(\cdot|z_{<i})$ and just write $p_w(\cdot).$ Adding these back and taking expectations over these variables, we get the desired result.
We can expand the KL divergence using Taylor series and evaluate the first and second order terms. 
\begin{align*}
  \err(\hw) &= \KL{p_w(\cdot)}{p_{\hw}(\cdot)}\\
  &=-E_{t\sim }\left[\log p_{\hw}(t) - \log p_w(t)\right]\\
  &=-E_{t\sim p_w}\left[\inpro{\grad_w\log p_w(t)}{\hw-w}+(\hw-w)^T \grad^2_w \log p_w(t) (\hw-w)+\cdots\right]\\
  &=\inpro{g_w}{\hw-w} + (\hw-w)^T H_w (\hw-w)+\cdots
\end{align*}
where $g_w=-E_{t\sim p_w}[\grad_w\log p_w(t)]$ and $H_w=- E_{t\sim p_w}[\grad^2_w \log p_w(t)].$\\
(1) We first evaluate $g_w.$
\begin{align*}
  g_w=-E_{t\sim p_w}[\grad_w\log p_w(t)]&=\E_{t\sim p_w}\left[\frac{\grad_w p_w(t)}{p_w(t)}\right]\\
  &=\sum_t \grad_w p_w(t)\\
  &= \grad_w (\sum_t p_w(t))\\
  &=\grad_w (1) = 0.  
\end{align*}
(2) We now evaluate $H_w$.
\begin{align*}
    H_w&=-E_{t\sim p_w}[\grad^2_w \log p_w(t)]\\
    &=-\E_{t\sim p_w}\left[\grad_w \left(\frac{\grad_w p_w(t)}{p_w(t)}\right)\right]\\
    &=-\E_{t\sim p_w}\left[\frac{\grad^2_w p_w(t)}{p_w(t)}-\frac{(\grad_w p_w(t))(\grad_w p_w(t))^T}{p_w(t)^2}\right]\\
    &=-\E_{t\sim p_w}\left[\frac{\grad^2_w p_w(t)}{p_w(t)}-(\grad_w \log p_w(t))(\grad_w \log p_w(t))^T\right]\\
    &=-\sum_t \grad^2_w p_w(t)+\E_{t\sim p_w}\left[(\grad_w \log p_w(t))(\grad_w \log p_w(t))^T\right]\\
    &=-\grad^2_w \left(\sum_t  p_w(t)\right)+\E_{t\sim p_w}\left[(\grad_w \log p_w(t))(\grad_w \log p_w(t))^T\right]\\
    &=-\grad^2_w \left(1\right)+\E_{t\sim p_w}\left[(\grad_w \log p_w(t))(\grad_w \log p_w(t))^T\right]\\
    &=\E_{t\sim p_w}\left[(\grad_w \log p_w(t))(\grad_w \log p_w(t))^T\right]\\
\end{align*}
\end{proof}

\section{LoRA experiments}

See Table~\ref{tab:lora} for our LoRA experiments. We initialize the model with the optimal choice of DisQ for 3.25 bits and add LoRA adapters. We train LoRA adapters while freezing the rest of the parameters.

\begin{table}[t]
\centering
\begin{tabular}{llllll}
\toprule
LoRA lr & LoRA rank & GSM8k$\uparrow$ & Wiki$\downarrow$ & MMLU$\uparrow$ & Wino$\uparrow$ \\
\midrule
0.0 & 0 & 62.9{\tiny $\pm$ 1.3} & 12.6 & 60.5{\tiny $\pm$ 0.4} & 73.0{\tiny $\pm$ 1.2} \\
3E-06 & 8 & 63.2{\tiny $\pm$ 1.3} & 12.6 & 60.8{\tiny $\pm$ 0.4} & 72.8{\tiny $\pm$ 1.3} \\
3E-06 & 16 & 63.3{\tiny $\pm$ 1.3} & 12.6 & 60.8{\tiny $\pm$ 0.4} & 72.8{\tiny $\pm$ 1.2} \\
3E-06 & 32 & 63.3{\tiny $\pm$ 1.3} & 12.6 & 60.8{\tiny $\pm$ 0.4} & 73.0{\tiny $\pm$ 1.2} \\
1E-05 & 8 & 63.7{\tiny $\pm$ 1.3} & 12.6 & 61.0{\tiny $\pm$ 0.4} & 73.1{\tiny $\pm$ 1.2} \\
1E-05 & 16 & 63.5{\tiny $\pm$ 1.3} & 12.6 & 60.9{\tiny $\pm$ 0.4} & 73.2{\tiny $\pm$ 1.2} \\
1E-05 & 32 & 63.9{\tiny $\pm$ 1.3} & 12.6 & 60.9{\tiny $\pm$ 0.4} & 73.0{\tiny $\pm$ 1.2} \\
3E-05 & 8 & 64.4{\tiny $\pm$ 1.3} & 12.5 & 61.1{\tiny $\pm$ 0.4} & 73.0{\tiny $\pm$ 1.2} \\
3E-05 & 16 & 64.1{\tiny $\pm$ 1.3} & 12.5 & 61.2{\tiny $\pm$ 0.4} & 72.8{\tiny $\pm$ 1.3} \\
3E-05 & 32 & 64.1{\tiny $\pm$ 1.3} & 12.5 & 61.0{\tiny $\pm$ 0.4} & 72.9{\tiny $\pm$ 1.2} \\
1E-04 & 8 & 66.2{\tiny $\pm$ 1.3} & 12.4 & 61.1{\tiny $\pm$ 0.4} & 72.9{\tiny $\pm$ 1.2} \\
1E-04 & 16 & 66.7{\tiny $\pm$ 1.3} & 12.4 & 61.4{\tiny $\pm$ 0.4} & 72.6{\tiny $\pm$ 1.3} \\
1E-04 & 32 & 67.0{\tiny $\pm$ 1.3} & 12.4 & 61.4{\tiny $\pm$ 0.4} & 73.0{\tiny $\pm$ 1.2} \\
3E-04 & 8 & 65.3{\tiny $\pm$ 1.3} & 12.3 & 61.2{\tiny $\pm$ 0.4} & 73.5{\tiny $\pm$ 1.2} \\
3E-04 & 16 & 66.5{\tiny $\pm$ 1.3} & 12.3 & 61.3{\tiny $\pm$ 0.4} & 73.1{\tiny $\pm$ 1.2} \\
3E-04 & 32 & 66.8{\tiny $\pm$ 1.3} & 12.3 & 61.4{\tiny $\pm$ 0.4} & 73.1{\tiny $\pm$ 1.2} \\
1E-03 & 8 & 0.0{\tiny $\pm$ 0.0} & 21056.1 & 22.9{\tiny $\pm$ 0.4} & 51.3{\tiny $\pm$ 1.4} \\
1E-03 & 16 & 59.2{\tiny $\pm$ 1.4} & 13.0 & 58.1{\tiny $\pm$ 0.4} & 73.2{\tiny $\pm$ 1.2} \\
1E-03 & 32 & 59.4{\tiny $\pm$ 1.4} & 12.9 & 58.5{\tiny $\pm$ 0.4} & 72.7{\tiny $\pm$ 1.3}\\
\hline
\multicolumn{6}{c}{Base model} \\
--- & --- & 84.4{\tiny $\pm$ 1.0} & 9.5 & 70.4{\tiny $\pm$ 0.4}  & 73.5{\tiny $\pm$ 1.2} \\
\bottomrule
\end{tabular}

\caption{LoRA experiments with $3.25$ bits. The first row corresponds to the optimal DiscQ model with $3.25$ bits that is not trained with LoRA. The last row corresponds to the full precision model.
}
\label{tab:lora}
\end{table}

\end{document}